\newif\ifarxiv
\arxivtrue      

\documentclass[11pt]{article} 
\usepackage[left=1.2in, top=1in, bottom=1in, right=1.2in]{geometry}
\usepackage{times}
\usepackage{subcaption}
\usepackage{caption}
\usepackage{caption}
\usepackage[utf8]{inputenc} 
\usepackage[T1]{fontenc}    
\usepackage{hyperref}       
\usepackage{url}            
\usepackage{booktabs}       
\usepackage{amsfonts,amsmath,amssymb,amsthm}       
\usepackage{nicefrac}       
\usepackage{microtype}      
\usepackage{xcolor}         
\usepackage{txfonts}
\usepackage{graphicx}
\usepackage{wrapfig,bm,comment,color}
\usepackage{breakurl,epsfig,epsf,fmtcount,semtrans,multirow,boldline}
\usepackage{tcolorbox}
\tcbuselibrary{skins}
\usepackage{tikz}

\definecolor{darkred}{RGB}{150,0,0}
\definecolor{darkgreen}{RGB}{0,150,0}
\definecolor{darkblue}{RGB}{0,0,200}
\hypersetup{colorlinks=true, linkcolor=darkred, citecolor=darkgreen, urlcolor=darkblue}

\newcommand{\beq}{\begin{equation}}
\newcommand{\ba}{\begin{align}}
\newcommand{\ea}{\end{align}}

\newcommand{\eeq}{\end{equation}}
  
\newcommand{\vct}[1]{\mathbf{#1}}

\newcommand{\mtx}[1]{\mathbf{#1}}

\newcommand{\eps}{\varepsilon}

\newcommand{\bal}{\boldsymbol{\alpha}}

\newcommand{\st}{\star}

\newcommand{\Lc}{{\cal{L}}}
\newcommand{\LM}{{\texttt{LM}}}

\newcommand{\Nc}{{\cal{N}}}

\newcommand{\Dc}{{\cal{D}}}

\newcommand{\Pb}{{\mathbf{P}}}

\newcommand{\Eb}{{\mtx{E}}}

\newcommand{\onebb}{{\mathbf{1}}}

\newcommand{\Ac}{\mathcal{A}}

\newcommand{\Rc}{\mathcal{R}}

\newcommand{\bth}{{\boldsymbol{\theta}}}

\newcommand{\li}{\left<}
\newcommand{\ri}{\right>}
\newcommand{\s}{\vct{s}}
\newcommand{\ab}{\vct{a}}

\newcommand{\Xc}{\mathcal{X}}

\newcommand{\x}{\vct{x}}

\newcommand{\y}{\vct{y}}

\newcommand{\Vc}{{\cal{V}}}
\newcommand{\Uc}{{\cal{U}}}

\newcommand{\pb}{{\vct{p}}}
\newcommand{\qb}{{\vct{q}}}
\newcommand{\eb}{\vct{e}}

\newcommand{\R}{\mathbb{R}}

\renewcommand{\P}{\operatorname{\mathbb{P}}}
\newcommand{\E}{\operatorname{\mathbb{E}}}

\newcommand{\nn}{\nonumber}

\newcommand{\tn}[1]{\|{#1}\|}

\usepackage{hyperref}
\usepackage{url}
\usepackage{natbib}

\usepackage{pifont}

\newcommand{\yes}{\ding{51}} 
\newcommand{\no}{\ding{55}}  

\ifarxiv
\newtheorem{theorem}{Theorem}

\newtheorem{assumption}{Assumption}

\newtheorem{lemma}{Lemma}

\newtheorem{definition}{Definition}

\fi

\title{Memory as a Markov Matrix: Sample Efficient Knowledge Expansion via Token-to-Dictionary Mapping}

\author{Kaustubh Pethkar$^1$ \quad Ziyang Xiong$^2$ \quad Zuofeng Shang$^1$ \quad Yingcong Li$^1$ \vspace{10pt}\\
$^1$New Jersey Institute of Technology \quad $^2$University of California, Berkeley}
\date{}
\begin{document}
\addtocontents{toc}{\protect\setcounter{tocdepth}{0}}
\maketitle

\begin{abstract}
Continual incorporation of new knowledge is essential for the long-term evolution of large language models (LLMs). Existing approaches typically rely on parameter-update algorithms to mitigate catastrophic forgetting, yet they suffer from fundamental limitations: 1) forgetting is  unavoidable as the amount of newly injected knowledge grows; and 2) model updates are often irreversible. 
As modern LLMs become increasingly expressive, it is natural to question whether large-scale weight updates are necessary for acquiring a small amount of new knowledge. 
In this work, we propose a principled framework that models autoregressive language generation as a Markov process over tokens, where model memory is represented by a Markov transition matrix. 
{Under this formulation, incorporating new knowledge/tokens corresponds to extending the state space, and preserving existing transitions guarantees retention of previously learned knowledge. We then prove a sample complexity bound for incorporating new tokens via a token-to-dictionary mapping strategy. In particular, for learning the transition behavior of each new token, the required number of samples scales linearly with the number of existing tokens it is mapped to.}
To realize this mapping, we propose an embedding-tuning algorithm that requires minimal parameter updates and induces zero forgetting. Experimental results further demonstrate the effectiveness of our method and validate our theoretical findings.
\end{abstract}

\section{Introduction}

Large language models (LLMs) acquire extensive linguistic and factual knowledge during pretraining, yet adapting them to new vocabulary, entities, or domains remains challenging \citep{Lewis2020, Muennighoff2023, shi2025continual}. Naïve  fine-tuning methods often cause \emph{catastrophic forgetting}, where previously learned behaviors degrade in continual learning without access to original data \citep{Kirkpatrick2017, LopezPaz2017, luo2025empirical}. Recent empirical studies confirm that this phenomenon persists even in state-of-the-art models like Llama-3 and Qwen-2.5, often worsening as model scale increases \citep{DBLP:journals/corr/abs-2406-12227, haque2025catastrophic}.
Existing solutions, including regularization, replay-based, and retrieval-augmented methods, introduce computational overhead, require external memory, or are difficult to analyze theoretically \citep{delange2021continual, huang2024mitigating}. Even small-scale updates can overwrite existing capabilities, raising concerns about stability and scalability \citep{Meng2022, zhai2023investigating}.

As modern LLMs become increasingly expressive, it is natural to question whether large-scale weight updates are necessary for acquiring a small
amount of new knowledge. Therefore, in this work, we ask:
\ifarxiv
\[
\emph{How can LLMs incorporate a small amount of new knowledge without catastrophic forgetting?}
\]
\else
\begin{quote}
\emph{How can LLMs incorporate a small amount of new knowledge without catastrophic forgetting?}
\end{quote}
\fi
To address this question, we follow a principled framework that models next-token generation in an LLM as a Markov process \citep{zekri2024large,yuksel2025sample}, where tokens correspond to the states of the chain. Under this formulation, next-token probabilities coincide with the transition probabilities of the Markov chain, and the model’s memory can be interpreted as a Markov transition matrix.
Consequently, incorporating new knowledge without forgetting corresponds to expanding the state space of the Markov chain while preserving its original transition probabilities. To this end, we introduce a token-to-dictionary mapping strategy, under which new tokens are integrated by mapping them to a few existing tokens, without requiring adjustments to the original transition probabilities.
Building on this Markov formulation and mapping strategy, we further establish a sample complexity guarantee: the number of samples required to learn each new token scales as $O(s)$, where $s$ denotes the number of effective existing tokens to which the new token is mapped.
%

Embedding tuning, where only token embeddings are updated or expanded, provides a promising alternative for knowledge expansion \citep{liu-etal-2024-gold, wei2024vary}. In this work, we establish connections between embedding tuning and the token-to-vocabulary mapping strategy. Intuitively, embedding updates preserve the model's transition structure because embeddings are token-specific representations that remain orthogonal to the token-transition computation graph, allowing new tokens to be mapped to the embedding dictionary without disrupting existing token-to-token dynamics. 


This paper makes the following contributions:

\begin{itemize}
    \item \textbf{Markovian formulation of language generation (\S\ref{sec setup}).} We model language generation as a Markov process in which tokens correspond to states, and formalize the introduction of new tokens as an expansion of the Markov chain’s state space. Under this perspective, eliminating catastrophic forgetting corresponds to preserving the transition dynamics among preexisting tokens. Additionally, we propose a mapping strategy under which new tokens are introduced while preserving the inherent transition behavior of the model.

    \item \textbf{Theoretical guarantees for vocabulary expansion (\S\ref{sec main}).} 
    We establish sample complexity guarantees for learning new vocabulary, showing that semantic granularity governs data efficiency. Specifically, if a new token is mapped to $s$ preexisting tokens, the required number of samples scales as $O(s)$, which is independent of the original vocabulary size and model parameters. Our framework naturally extends to higher-order Markov chains (Section~\ref{sec:higher order}), where the sample complexity depends on the effective branching factor of natural language rather than the exponential growth of the vocabulary.

    \item \textbf{Empirical validation (\S\ref{sec exp}).}
    To realize our mapping strategy in practice, we propose an embedding tuning algorithm in which only the embeddings corresponding to newly introduced tokens are updated. We evaluate this approach on controlled arithmetic-operator learning, synthetic vocabulary expansion, and real-world cross-lingual vocabulary expansion across Spanish, German, and Arabic. Across these settings, embedding tuning effectively integrates new tokens with fewer trainable parameters and induces little to no forgetting compared with full fine-tuning and other parameter-efficient baselines, further validating our theoretical findings.
\end{itemize}

The remainder of the paper is organized as follows. 
Section~\ref{sec setup} presents the problem setup and preliminaries, and established connection between language models and Markov processes. Main theoretical results are presented in Section~\ref{sec main}. Section~\ref{sec exp} reports experimental results that validate our main claims and theoretical findings. 
Section~\ref{sec:related} discusses related work.
Finally, Section~\ref{sec discuss} concludes the paper and discusses limitations and future directions.

\section{Problem Setup and Preliminaries}\label{sec setup}
\paragraph*{Notation.}
Let $[n]$ denote set $\{1,2,\dots,n\}$ for any positive integer $n$. 
Bold lowercase letters (e.g., $\ab$) denote vectors with $\ab_i$ being its $i$th coordinate.
For any finite set $\Xc$, let $\Delta(\Xc)$ denote the probability simplex over $\Xc$. Define sequence $x_{i:j}$ as $(x_{i+1},x_{i+2},\dots,x_j)$. Let $D_{\text{KL}}(\pb\ \|\  \qb):=\sum_{i} \pb_i\log(\pb_i/\qb_i)$ be the Kullback–Leibler (KL) divergence. Additionally, we use $O(\cdot)$ to denote dependence up to constant factors, and $\tilde O(\cdot)$ suppress logarithmic factors.

\subsection{Language Model as a Markov Process} \label{sec llm MP}
Autoregressive language models generate text by sampling the next token from a probability distribution. This sequential generation mechanism induces a stochastic process and allows the model to be viewed as a Markov chain, as also discussed in \cite{zekri2024large,yuksel2025sample}. Accordingly, in this work, we view next-token prediction as a Markovian state transition, with the model’s conditional distributions defining the transition dynamics. Under this perspective, a pretrained language model can be represented as a Markov transition matrix over the token vocabulary. 

Consider a pretrained large language model $\LM_{\bth}$. Let $\Vc:=\{v_1,v_2,\cdots,v_T\}$ denote the token dictionary, which is a discrete finite set of size $T:=|\Vc|$. 
For example, GPT-4 \citep{achiam2023gpt} uses a BPE-based tokenizer \citep{gage1994new} with a vocabulary size of approximately $T\approx 10^5$. 
Then given a prefix sequence $x_{t-k:t}:=(x_{t-k+1},\cdots,x_t)\in\Vc^k$ where $k$ denotes the active context window size, $\LM_\bth$ predicts the next token $x_{t+1}\in\Vc$ as
\begin{align*}
x_{t+1}\sim p_{\bth}\left(~\cdot\mid x_{t-k:t}\right).
\end{align*}
Here, $p_{\bth}\in\Delta(\Vc)$ 
defines the model's output probability distribution conditioned on the context $x_{t-k:t}$, satisfying $\sum_{v\in\Vc}p_{\bth}\left(v\mid x_{t-k:t}\right)=1$.

Note that, in this work, we assume that any decoding strategy (e.g., temperature scaling or top-$k$ truncation) has already been absorbed into $p_{\bth}$. Accordingly, we focus on standard sampling that draws tokens directly from  $p_{\bth}$.


We follow a standard setting and approximate the language model with a first-order Markov chain defined on tokens. Specifically, for any $v\in\Vc$, we have
\begin{align}
p_{\bth}\left(x_{t+1}=v\mid x_{t-k:t}\right)=p_{\bth}\left(x_{t+1}=v\mid x_{t}\right).\label{eq 1 order}
\end{align}
%
Throughout the remainder of this paper, we denote by $\pb^{(v)}$ the transition probability vector from a token $v$ to the vocabulary $\Vc$ of the pretrained language model with parameters $\bth$. Specifically,
\begin{align}\label{def p}
\pb^{(v)}\in\Delta(\Vc)\quad\text{and}\quad\pb^{(v)}_i=p_{\bth}(x_{t+1}=v_i\mid x_t=v).
\end{align}
%
In practical language modeling, the conditional distribution of the next token typically depends on a sequence of preceding tokens, and the first-order assumption in \eqref{eq 1 order} does not hold. More precisely, language models can be viewed as $K$-th order Markov chains, where $K$ denotes the maximum context length. We defer a detailed discussion to Section~\ref{sec:higher order}, where our first-order results can be directly applied by reconstructing the vocabulary set.

\subsection{Knowledge Expansion}


Knowledge expansion is essential for language models operating in dynamic real-world environments, where new words continually emerge after pretraining, including newly introduced or repurposed named entities (e.g., ``COVID-19'' as a virus, ``DOGE'' as a government department, and ``X'', formerly known as Twitter) as well as domain-specific terminology arising in scientific, medical, legal, and technical corpora. 
In this work, our focus is on \emph{vocabulary expansion} as a concrete mechanism for achieving this goal: we introduce new tokens as lexical interfaces for new concepts, and study how these tokens can be integrated into the pretrained model while preserving its existing transition structure. 
Under the Markov chain formulation of language generation, incorporating new tokens corresponds to extending the state space of the Markov process.

Consider a set of new words/tokens $\Uc:=\{u_1,u_2,\dots,u_m\}$ 
where $|\Uc|=m\ll T$ and $\Uc\cap\Vc=\emptyset$. 
Introducing $\Uc$ to the model naturally expands the state space of the underlying Markov chain from $\Vc$ to $\Vc\cup\Uc$, and the model must learn how the new tokens interact with the existing ones in a coherent and semantically meaningful manner.
Our goal is therefore to update the pretrained language model $\LM_\bth$, with model parameters $\bth$ updated to ${\tilde\bth}$, such that the resulting transition distribution
\[
p_{\tilde\bth}:\Vc\cup\Uc\to\Delta(\Vc\cup\Uc)
\]
properly integrates the new tokens into the existing transition dynamics while preserving the original transitions among tokens in $\Vc$.

%
%

For any $u\in\Uc$, define its corresponding transition distribution over the existing vocabulary $\Vc$ under the updated parameters $\tilde\bth$ as 
\begin{align}\label{def q}
\qb^{(u)}\in\Delta(\Vc)\quad\text{and}\quad\qb^{(u)}_i=p_{\tilde\bth}(x_{t+1}=v_i\mid x_t=u).
\end{align}
Here, $\tilde\bth$ corresponds to an oracle model (i.e., the optimal parameters achievable under infinite training data and a sufficiently expressive model) so that $p_{\tilde\bth}$ represents the oracle transition distribution. We assume that 1) there are no transitions from existing tokens to new tokens so that the original transition structure is preserved; and 2) there are no transitions among new tokens, that is, $\sum_{v\in\Vc}p_{\tilde\bth}(x_{t+1}=v\mid x_t=u)=1$ for any $u\in\Uc$.
%
Then the model's next-token distribution for $v_i\in\Vc$ satisfies
\[
p_{\tilde\bth}(x_{t+1}=v_i\mid x_t)=\begin{cases}
    \pb_i^{(x_t)}&x_t\in\Vc\\
    \qb_i^{(x_t)}&x_t\in\Uc
\end{cases},
\]
where $\pb^{(v)},\qb^{(u)}\in\Delta(\Vc)$ are defined in \eqref{def p} and \eqref{def q}, respectively. 
It guarantees that when the query token $x_t$ belongs to the original vocabulary $\Vc$, the predictive behavior of the updated model $\LM_{\tilde\bth}$ remains identical to that of $\LM_\bth$, thereby ensuring the absence of catastrophic forgetting.  
In contrast, when the query token is newly introduced ($x_t\in\Uc$),  $\LM_{\tilde\bth}$ admits it as a valid input and produces next-token predictions according to the prescribed oracle distribution $\qb^{(u)}\in\Delta(\Vc)$.




Note that in this work, we make the following assumptions:

\textbf{1. No transitions from existing tokens to new tokens ($p_{\tilde\bth}(u\mid v)=0,\forall u\in\Uc,v\in\Vc$).} We disregard transitions from tokens in $\Vc$ to tokens in $\Uc$. This reflects the assumption that, during language generation, if newly introduced concepts do not appear in the context, the model should not spontaneously generate new tokens. Instead, generation remains confined to the original vocabulary, thereby preserving the model’s behavior on previously learned knowledge.

\textbf{2. No transitions among new tokens ($p_{\tilde\bth}(u_i~|~u_j)=0,\forall u_i,u_j\in\Uc$).} We also disregard transitions among tokens $u\in\Uc$. This assumption prevents newly introduced tokens from forming isolated subdynamics independent of the original vocabulary and ensures that they are integrated into the existing Markovian structure of the language model. If transitions among new tokens are desired, one may equivalently introduce new tokens sequentially, in which case the same analysis applies with the vocabulary size increasing as $T\to T+1$ at each step.

\paragraph*{Extension beyond the assumptions.}
The assumptions $p_{\tilde{\bth}}(u \mid v)=0$ and $p_{\tilde{\bth}}(u_i \mid u_j)=0$ are adopted to isolate forgetting and keep the sample-complexity analysis clean. More generally, allowing $\Vc \to \Uc$ introduces an additional dependence on the ingress probability into $\Uc$, while allowing $\Uc \to \Uc$ changes the complexity through the transition structure of the induced $\Uc \times \Uc$ subchain.
We leave a full treatment of these more general cases to future work. Our experiments in Section~\ref{sec exp} are less restrictive, since standard next-token training naturally allows both $\Vc \to \Uc$ and $\Uc \to \Uc$ transitions, which further extends the relevance of the theory to realistic settings.

\subsection{Embeddings as Token Representations}\label{sec:embed as rep}

Modern language models parameterize next-token transition distributions through a shared embedding space. In this view, token embeddings serve as continuous state representations of the underlying Markov process. 

Let $\Eb \in \R^{T \times d}$ denote the embedding dictionary associated with the original vocabulary \( \Vc \), where each token \( v \in \Vc \) is mapped to an embedding vector $\eb^{(v_i)} := \Eb_{i} \in \R^d$, 
and \( d \) is the embedding dimension. The model’s transition mechanism can then be expressed as a function $f : \R^d \to \Delta(\Vc)$,
which maps a token embedding to the corresponding next-token distribution. In particular, $f(\eb^{(v)}) := \pb^{(v)}$ for $v\in\Vc$.

\paragraph*{Mapping via embedding tuning.} Given a set of new tokens $\Uc$, we view embedding tuning as learning a token-to-dictionary mapping that represents each new token in terms of the original embedding dictionary $\Eb$. 
\begin{definition}
\label{def:alpha}
For each \( u \in \Uc \), let \( \bal^{(u)} \in \R^T \) be the associated embedding representation/mapping such that $f(\Eb^\top \bal^{(u)}) \;=\; \qb^{(u)}$.
\end{definition}
Under this definition, learning a new token aims to identify a representation \( \bal^{(u)} \) over the existing embedding dictionary $\Eb$ that induces the desired transition distribution. 
It formalizes how new tokens reuse and recombine semantic structure already encoded in the pretrained embeddings.
\section{Main Results}\label{sec main}

In this section, we present our main theoretical results on embedding-based vocabulary expansion, where we analyze the sample complexity of learning new tokens under a token-to-dictionary mapping strategy.
We begin by specifying the training dataset and its statistical assumptions.

\begin{definition}[Training Dataset]\label{def dataset} 
A training dataset consists of $N$ sequences 
    $\Dc:=\{(x_1^{(i)},x_2^{(i)},\cdots,x_{t_i}^{(i)})\}_{i=1}^N$ with tokens drawn from $\Vc\cup\Uc$. For each sequence $i\in[N]$, suppose that there exists at least one position $j<t_i$ such that $x_j^{(i)}\in\Uc$. 
\end{definition}
\begin{assumption}
\label{assum uniform}
Conditioned on $x_j^{(i)}\in\Uc$, 

\begin{itemize}

\item the token is drawn uniformly from $\Uc$, i.e.,
    \[
    \P(x_j^{(i)}=u\mid x_{j}^{(i)}\in\Uc)=1/m\quad \text{for all }u\in\Uc;
    \]
\item the subsequent token is generated according to the oracle transition distribution, i.e.,
    \[
    \P(x_{j+1}^{(i)}=v_{i'}\mid x_{j}^{(i)}=u)=\qb^{(u)}_{i'}\quad\text{for all }u\in\Uc,~i'\in[T].
    \]
\end{itemize}
    
\end{assumption}
\begin{assumption}\label{assum pos} 
There exists a constant $c \in (0,1]$ such that the output of $f$ is either zero or bounded below by $c$ in every coordinate, i.e.,
$f:\R^d\to\bigl(\{0\}\cup[c,1]\bigr)^{T}$.
\end{assumption}
Here, Assumption~\ref{assum uniform} ensures each sequence contains independent occurrences of new tokens, and the associated transitions follow the oracle model $p_{\tilde\bth}$. 
Assumption~\ref{assum pos} guarantees the learnability of the output probability function $f$ with a finite number of samples. Otherwise, the sample complexity can scale as $O(\log^2c)$ per token, which diverges as $c\to0$. Note that similar lower-bounding conditions are implicitly used in practical LLM deployments through mechanisms such as probability truncation, top-$k$ sampling, and  logit clipping, which effectively prevent extremely small probabilities from being realized.

\subsection{One-to-one Token Mapping}\label{sec one2one}
We first consider the scenario where every new token $u\in\Uc$ behaves identically to one existing token in $\Vc$. Specifically, for any $u\in\Uc$, there exists $v^\st(u)\in\Vc$ such that
    \[
    \qb^{(u)}=\pb^{(v^\st(u))}.
    \]
Under the mapping discussion from Definition~\ref{def:alpha}, this is equivalent to $\bal^{(u)}$ being a one-hot vector.


\begin{definition}[KL Separation Margin]\label{def:margin}
    Define separation margin with respect to the Kullback-Leibler divergence as
    \[
    \gamma:=\min_{i,j\in[T],i\neq j} D_{\text{KL}}( \pb^{(v_i)}\ \|\ \pb^{(v_j)}).
    \]
\end{definition}
Following theorem presents our main result for the one-to-one setting by establishing a sample complexity guarantee.
\begin{theorem}\label{thm:one 2 one} Suppose Assumptions~\ref{assum uniform} and \ref{assum pos} hold. Let $\gamma$ be the separation margin defined in Definition~\ref{def:margin} and suppose $\gamma>0$.  
To identify the correct token $v^\st(u)$ with probability at least $1-\delta$, it suffices that 
\[
N\geq\frac{8m}{\min\{\gamma^2/\log^{2}c,1\}}\log\frac{2mT}{\delta}.
\]
\end{theorem}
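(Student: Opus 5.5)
We estimate $v^\st(u)$ by maximum likelihood over the dictionary. For each $u\in\Uc$, let $n_u$ be the number of sequences whose designated new-token occurrence equals $u$, and let $y_1,\dots,y_{n_u}\in\Vc$ be the tokens that follow. By Assumption~\ref{assum uniform}, these are i.i.d.\ draws from $\qb^{(u)}=\pb^{(v^\st(u))}$. The estimator is
\[
\hat v(u)=\arg\max_{j\in[T]}\sum_{k=1}^{n_u}\log \pb^{(v_j)}_{y_k}.
\]
The proof has two parts: (i) with high probability every $u$ receives at least $n:=N/(2m)$ samples; (ii) given $n$ samples, the MLE beats each competitor $v_j\neq v^\st(u)$ except with small probability. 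A union bound over $j$ and $u$ then finishes the argument.

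\textbf{Step 1 (sample allocation).} Each sequence contributes one occurrence, which equals $u$ with probability $1/m$. So $n_u\sim\mathrm{Bin}(N,1/m)$. A multiplicative Chernoff bound gives
\[
\P\bigl(n_u<N/(2m)\bigr)\le \exp\bigl(-N/(8m)\bigr).
\]
Under the stated bound on $N$ (the factor $\min\{\cdot,1\}\le1$ guarantees $N\ge 8m\log(2mT/\delta)$), this is at most $\delta/(2mT)\le\delta/(2m)$. Union-bounding over $u$ costs $\delta/2$ in total.

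\textbf{Step 2 (pairwise test).} Fix $u$, write $\pb=\pb^{(v^\st(u))}$, and fix a competitor $\pb'=\pb^{(v_j)}$. Set $Z_k=\log(\pb_{y_k}/\pb'_{y_k})$. Then $\E Z_k=D_{\text{KL}}(\pb\|\pb')\ge\gamma$.

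A subtlety arises when $\pb'_{y}=0$ while $\pb_y>0$. In that case $Z_k=+\infty$, which only helps, and we may truncate the variable at its upper range. Since $y_k$ is drawn from $\pb$, we have $\pb_{y_k}\ge c$, and whenever $\pb'_{y_k}>0$ we have $\pb'_{y_k}\ge c$. Hence the finite values of $Z_k$ lie in $[\log c,-\log c]$, an interval of width $2|\log c|$. (If the KL is infinite, replace it by the truncated mean $\ge\min\{\gamma,|\log c|\}$; this case is where the $\min\{\cdot,1\}$ enters.)

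Hoeffding then gives
\[
\P\Bigl(\textstyle\sum_k Z_k\le0\Bigr)\le\exp\bigl(-2n^2\gamma^2/(n\cdot4\log^2c)\bigr)=\exp\bigl(-n\gamma^2/(2\log^2c)\bigr).
\]
With $n=N/(2m)$ the exponent is $N\gamma^2/(4m\log^2c)$. The assumed $N\ge 8m\log(2mT/\delta)/\min\{\gamma^2/\log^2c,1\}$ makes this at least $2\log(2mT/\delta)$, so each pairwise failure probability is at most $\delta/(2mT)$.

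\textbf{Step 3 (union bound).} Union-bound over the $T-1$ competitors and the $m$ new tokens, for a total of $\delta/2$. Adding Step 1 yields overall failure probability at most $\delta$. The constants are loose enough that the $\min\{\gamma^2/\log^2c,1\}$ normalization absorbs both the Chernoff requirement and the Hoeffding requirement.

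\textbf{Main obstacle.} The delicate part is the boundedness of the log-likelihood ratio. It relies on Assumption~\ref{assum pos} both for the lower bound $c$ on nonzero entries and for handling zeros correctly: support mismatches can only favor the true token, because $y_k$ never lands where $\pb$ is zero. The remaining steps are routine concentration bounds.
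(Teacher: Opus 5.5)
There is a genuine gap, and it sits exactly at the subtlety you flagged. When $D_{\text{KL}}(\pb\|\pb')=\infty$, you assert that the truncated variable $\min\{Z_k,|\log c|\}$ has mean at least $\min\{\gamma,|\log c|\}$. This is false. It also cannot be repaired, because the conclusion itself fails in that regime.

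Take $T=3$, $c\le 1/3$, $\pb^{(v_1)}=(1-2c,2c,0)$, $\pb^{(v_2)}=(1-c,0,c)$, $\pb^{(v_3)}=(0,0,1)$. Assumption~\ref{assum pos} holds. Every pairwise KL is $+\infty$ except $D_{\text{KL}}(\pb^{(v_3)}\|\pb^{(v_2)})=|\log c|$, so $\gamma=|\log c|$, and the theorem only demands $N\ge 8m\log(2mT/\delta)$, independently of $c$. Let $m=1$, $\qb^{(u)}=\pb^{(v_1)}$, with one occurrence of $u$ per sequence. With probability $(1-2c)^N$ every observed successor is $v_1$. On that event $\hat\qb=(1,0,0)$, and both your MLE and the paper's KL estimator return $v_2$, since $1-c>1-2c$. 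For $\delta=0.01$, $N=52$, $c=0.005$, this failure probability is about $0.59$.

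Your truncated mean in this example is $(1-2c)\log\frac{1-2c}{1-c}+2c|\log c|$, which is of order $c|\log c|$ rather than $|\log c|$. The coordinate separating the two hypotheses carries mass only $2c$, so the rate is governed by that mass and not by $\gamma$. For the same reason, your closing remark that support mismatches ``can only favor the true token'' is misleading: they help only when a sample actually lands on them. With general supports, the right pairwise tool is something like the Bhattacharyya bound $\P(\sum_k Z_k\le0)\le(\sum_y\sqrt{\pb_y\pb'_y})^n$, whose exponent is not controlled by $\gamma$.

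The paper's proof has the same hole. It restricts to the data-dependent set $\Vc'$ of candidates with finite empirical KL. It then asserts both $|Z_i|\le|\log c|$ and $\E[Z_i]=D_{\text{KL}}(\pb^\st\|\pb^{(v)})\ge\gamma$. These are incompatible when that KL is infinite, and conditioning on $v\in\Vc'$ changes the law of the $Z_i$ anyway.

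What is actually needed is an extra hypothesis, such as: all $\pb^{(v)}$, $v\in\Vc$, share a common support (e.g.\ all entries lie in $[c,1]$). Under it, every $Z_k$ lies in $[\log c,|\log c|]$ and $\gamma\le\E[Z_k]\le|\log c|$. Your Steps 1--3 are then correct and essentially the paper's argument:
\begin{itemize}
\item MLE equals empirical-KL minimization;
\item Hoeffding with range $2|\log c|$;
\item a Chernoff bound with $\eps=1/2$ giving $N_u\ge N/(2m)$;
\item union bounds over the $T$ competitors and $m$ new tokens.
\end{itemize}
The differences are cosmetic. You keep one designated occurrence per sequence so that $n_u\sim\mathrm{Bin}(N,1/m)$ exactly, which is cleaner than the paper's treating the random total count $N'$ as fixed. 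You also split $\delta$ in halves instead of merging the two exponentials. Under common support, the `$1$' in the minimum serves only to cover the count requirement, since $\gamma\le|\log c|$ automatically.
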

The proof is deferred to Appendix~\ref{app proof one 2 one}, where we define the estimator $\hat v(u)$ via searching over $v\in\Vc$ to minimize the KL divergence:
\[
\hat v(u):=\arg\min_{v\in\Vc}D_{\text{KL}}(\hat\qb^{(u)}~\|~\pb^{(v)}).
\]
Here, $\hat\qb^{(u)}$ denotes the empirical distribution, with formal definition deferred to \eqref{def emp distribution}.

Theorem~\ref{thm:one 2 one} implies that the sample complexity per new token scales as $\tilde O(\gamma^{-2}\log^2c)$ (up to logarithmic factors). In particular, the dominant rate depends only on the KL separation margin \(\gamma\) and the lower-bound parameter \(c\). 
This highlights that, in the one-to-one setting, identifying \(v^\st(u)\) is governed by distributional separation rather than the original vocabulary size $T$ and the embedding dimension $d$.

A natural concern is that \(\gamma\) may vanish as the vocabulary size \(T\) grows.
However, this behavior is not inherent. In practice, token transition distributions often exhibit clustering or grouping structure, in which case the minimal KL separation between different clusters  remain bounded below by a constant. Concretely, when tokens are organized into groups of size $O(b)$, the effective separation margin 
$\gamma$ then scales independent of $T$.
Moreover, common sampling procedures such as top-$K$ further reinforce this by concentrating probability mass on a small subset of likely tokens.



\subsection{New Token as a Sparse Combination}\label{sec one 2 s}
The one-to-one setting in Section~\ref{sec one2one} considers an extreme case where each new token exactly recovers a single oracle distribution \(\pb^{(v^\st(u))}\). In practice, however, a new token may behave, in the transition dynamics, like a structured combination of several similar existing tokens. 
In this section, we relax this assumption and consider more general scenarios in which a new token corresponds to a combination of multiple existing tokens. 





\begin{assumption}\label{assmu s sparse}
    For a given sparsity level $s$ and a constant $B<\infty$, we assume that each new token $u\in\Uc$ admits a representation vector $\bal^{(u)}$ (defined in Definition~\ref{def:alpha}) satisfying $\bal^{(u)}\in\Ac$  where $\Ac:=\{\bal\in\R^T: \|\bal\|_0\leq s,\|\bal\|_2\leq B\}$.
\end{assumption}
Assumption~\ref{assmu s sparse} restricts each representation vector $\bal^{(u)}$ to be $s$-sparse, reflecting that a new token can be expressed as a mixture of at most $s$ existing tokens in $\Vc$. While the case $s\approx T$ is generally unrealistic, our result in Theorem~\ref{thm:one 2 s} still holds (by replacing $s$ with $T$).


Consider the training dataset $\Dc$ as in Definition~\ref{def dataset}. For each $u\in\Uc$, we define the empirical transition distribution $\hat\qb^{(u)}\in\Delta(\Vc)$ by
\begin{align}
\hat\qb_k^{(u)}=\frac{\sum_{i\in[N]}\sum_{j\in[t_i]}\onebb\{x_{j}^{(i)}=u,x_{j+1}^{(i)}=v_k\}}{\sum_{i\in[N]}\sum_{j\in[t_i]}\onebb\{x_{j}^{(i)}=u\}}\quad\text{for }k\in[T].\label{def emp distribution}
\end{align}
Following the embedding representation in Definition~\ref{def:alpha}, we estimate the representation vectors $\{\hat\bal^{(u)}\}_{u\in\Uc}$ by solving 
\begin{align}\label{eq est alpha}
\hat\bal^{(u)}:=\arg\min_{\bal^{(u)}\in\Ac}~ D_{\text{KL}}(\hat\qb^{(u)}~\|~ f(\Eb^\top\bal^{(u)}))\quad\text{for }u\in\Uc.
\end{align}
The estimation risk is evaluated via the worst-case KL divergence between the ground-truth distribution $\qb^{(u)}$ (see \eqref{def q} and Definition~\ref{def:alpha}) and the estimated distribution induced by $\hat\bal^{(u)}$ from \eqref{eq est alpha}.
\begin{align}\label{def risk}
\Rc(\{\hat\bal^{(u)}\}_{u\in\Uc}):=\max_{u\in\Uc} D_{\text{KL}}(\qb^{(u)}~\|~f(\Eb^\top\hat\bal^{(u)})).
\end{align}
We now state our main result for the sparse mixture setting.
\begin{theorem}\label{thm:one 2 s} 
Suppose Assumptions~\ref{assum uniform}, \ref{assum pos} and \ref{assmu s sparse} hold,
 and suppose $f:\R^d\to\Delta(\Vc)$ is $L$-Lipschitz satisfying $\|f(\x)-f(\y)\|_1\leq L\tn{\x-\y}_2$. If the sample size satisfies 
\[
N\geq O\left(m\cdot\log\frac{m}{\delta}\right),
\]
then with probability at least $1-\delta$, the estimation risk defined in \eqref{def risk} satisfies
\[
\Rc(\{\hat\bal^{(u)}\}_{u\in\Uc})\leq
\min_{\eps>0}{\left\{\eps+O\left(\sqrt{\frac{m\log^2c}{N}\left(\log\frac{m}{\delta}+s\log\frac{T}{cs\eps}\right)}\right)\right\}.}
\]
Here, $O(\cdot)$ subsumes constant factors depending on $L,B$ and the maximal singular value of the embedding matrix $\Eb$.
\end{theorem}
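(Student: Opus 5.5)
The proof has three steps. First, a concentration argument shows every new token appears often enough. Second, a uniform bound over a finite cover of $\Ac$ controls how far empirical KL can sit from population KL. Third, the optimality of $\hat\bal^{(u)}$ turns this into a risk bound.

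\textbf{Step 1: counts of each new token.} Each of the $N$ sequences contains at least one position $j$ with $x_j^{(i)}\in\Uc$. By Assumption~\ref{assum uniform}, that token is uniform on $\Uc$. So the number $n_u$ of transitions out of $u$ stochastically dominates a $\mathrm{Binomial}(N,1/m)$ variable. A multiplicative Chernoff bound and a union bound over $u\in\Uc$ give $n_u\ge N/(2m)$ for all $u$ with probability $1-\delta/2$, once $N\ge O(m\log(m/\delta))$. This is the stated sample-size condition.

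Conditioned on the counts, the next tokens after occurrences of $u$ are i.i.d.\ draws from $\qb^{(u)}$. This holds because the oracle transition does not depend on the past, under the Markov assumption. From here on we work with $n_u$ i.i.d.\ samples per token.

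\textbf{Step 2: reduction to a uniform deviation.} Write $\pb_\bal:=f(\Eb^\top\bal)$. The KL objective splits as
\[
D_{\text{KL}}(\hat\qb\,\|\,\pb_\bal)=-H(\hat\qb)-\sum_k\hat\qb_k\log(\pb_\bal)_k ,
\]
and similarly for $\qb$. Define the cross-entropy terms $\hat\ell(\bal):=-\sum_k\hat\qb_k\log(\pb_\bal)_k$ and $\ell(\bal):=-\sum_k\qb_k\log(\pb_\bal)_k$. Minimizing $D_{\text{KL}}(\hat\qb\,\|\,\pb_\bal)$ over $\Ac$ is the same as minimizing $\hat\ell$. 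Moreover $D_{\text{KL}}(\qb\,\|\,\pb_\bal)=\ell(\bal)-\ell(\bal^{(u)})$, since $\pb_{\bal^{(u)}}=\qb$.

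Optimality of $\hat\bal$ gives $\hat\ell(\hat\bal)\le\hat\ell(\bal^{(u)})$, so
\[
\Rc\le \ell(\hat\bal)-\hat\ell(\hat\bal)+\hat\ell(\bal^{(u)})-\ell(\bal^{(u)})\le 2\sup_{\bal\in\Ac}|\hat\ell(\bal)-\ell(\bal)| .
\]
One caveat: if $(\pb_\bal)_k=0$ where $\hat\qb_k>0$, the loss is infinite. Such $\bal$ are never chosen, and Assumption~\ref{assum pos} gives $|\log(\pb_\bal)_k|\le|\log c|$ on the support. I treat coordinates where $\pb_\bal$ vanishes, but $\qb$ does not, as excluded by finiteness of the risk. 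For each fixed $\bal$, $\hat\ell(\bal)$ is an average of $n_u$ i.i.d.\ variables bounded by $|\log c|$. Hoeffding then gives deviation $|\log c|\sqrt{\log(1/\delta')/n_u}$.

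\textbf{Step 3: covering (the main obstacle).} To make the bound uniform, I cover $\Ac$. For each of the $\binom{T}{s}$ supports, take an $\eta$-net of the $s$-dimensional ball of radius $B$. The net has size $(3B/\eta)^s$, so the total log-cardinality is $s\log(3BT/(s\eta))$ up to constants.

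The difficulty is transferring the deviation from net points to all of $\Ac$, because $\log$ is not Lipschitz near zero. By Lipschitzness, $\|\pb_\bal-\pb_{\bal'}\|_1\le L\|\Eb\|\eta$. On coordinates where both are at least $c$, $|\log x-\log y|\le|x-y|/c$, so the cross-entropy changes by at most $L\|\Eb\|\eta/c$. If $L\|\Eb\|\eta<c$, the zero pattern cannot change: a coordinate cannot jump from $0$ to at least $c$. Hence the discretization error for both $\ell$ and $\hat\ell$ is $O(\eta/c)$.

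Choose $\eta\asymp c\,\eps$, absorbing $L$, $B$ and $\|\Eb\|$. Then the discretization error is $\eps$ and the net has log-size $s\log\frac{T}{cs\eps}$. A union bound over the net and over $u\in\Uc$, with $\delta'=\delta/(2m|\text{net}|)$, together with $n_u\ge N/(2m)$, gives
\[
\Rc\le\eps+O\left(\sqrt{\frac{m\log^2c}{N}\left(\log\frac{m}{\delta}+s\log\frac{T}{cs\eps}\right)}\right).
\]
Since $\eps>0$ is arbitrary, we may take the minimum over $\eps$, which is the statement.
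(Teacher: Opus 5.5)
Your proposal is correct and follows essentially the same route as the paper's proof. The shared steps are: a Chernoff and union bound giving $N_u\ge N/(2m)$ for all $u$; reduction of the KL objective to the empirical negative log-likelihood, with excess risk at most $2\sup_{\bal}|\hat\ell(\bal)-\ell(\bal)|$; Hoeffding with range $|\log c|$; the $\tfrac{1}{c}$-Lipschitz bound for $\log$ on $[c,1]$ combined with the $L\sigma_{\max}(\Eb)$-Lipschitz map $\bal\mapsto f(\Eb^\top\bal)$; and the $\binom{T}{s}(3B/\eta)^s$ sparse cover with $\eta\asymp c\,\eps$. Your observation that an $\ell_1$-change smaller than $c$ cannot alter the zero pattern, iterated along a segment in $\R^d$, in fact shows that every $f(\Eb^\top\bal)$ has exactly the support of $\qb^{(u)}$. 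So the caveat you set aside ``by finiteness of the risk'' (and the paper's restriction to the data-dependent set $\Ac'$) is vacuous, and your argument is complete as is.
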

We defer the detailed bound and its proof to Appendix~\ref{app sec sparse}.
Theorem~\ref{thm:one 2 s} shows that learning the new knowledge $\Uc$ of size $m=|\Uc|$ does not require a number of samples proportional to the full vocabulary size $T=|\Vc|$ or the model dimensionality. Instead, it depends only on the sparsity of the token-to-dictionary representations: for each new token, it suffices to have $\tilde O(s\log^2c)$ samples.




\subsection{Higher-order Markov Chain}\label{sec:higher order}
In this section, we consider the practical scenario where next-token prediction depends on a sequence of prefix tokens (rather than a single token), and model the  prediction dynamics as a higher-order Markov process.  

Recall the discussion in Section~\ref{sec llm MP}. Consider a language model with maximal window size $K$. Then next-token distribution $p_{\bth}$ induced by $\LM_{\bth}$ is equivalent to a $K$-order Markov chain. Specifically, let $x_{t-K+1},\cdots,x_t,x_{t+1}\in\Vc\cup\{\texttt{NULL}\}$ where ``\texttt{NULL}'' denotes a special padding token used when the available context is shorter than \(K\). Define the augmented state space $\Vc'=(\Vc\cup\{\texttt{NULL}\})^{K}$. Then it results in a first-order Markov chain over $\Vc'$ with the corresponding transition matrix given by 
\[
\pb^{(x_{t-K:t})}=p_{\bth}(~\cdot~\mid x_{t-K:t})\in\Delta(\Vc').
\]
Under this construction, all first order results apply directly by replacing the original vocabulary $\Vc$ with  $\Vc'$. Accordingly, the new token set is extended from $\Uc$ to $\Uc':=(\Vc\cup\{\texttt{NULL}\})^{K-1}\times\Uc$), which consists of length-\(K\) contexts whose final token is newly introduced.

At first glance, $|\Vc'|\approx T^{K}$ appears prohibitively large. 
However, in practice, only a small subset of these sequences are syntactically and semantically meaningful. For example:
\begin{center}
    \emph{\yes ~ The cat sat on the mat.}\\
    \emph{\no ~ Mat the on sat cat the.} 
\end{center}
Here, the first sentence represents a valid linguistic context, whereas the second does not. This motivates restricting attention to a much smaller set of ``meaningful'' states \(\bar{\Vc} \subset \Vc'\), with \( |\bar{\Vc}| \ll |\Vc'| \).

A reasonable estimate is $|\bar\Vc|= O(Tb^{K-1})$ where $b\ll T$ represents the effective branching factor of natural language, constrained by syntax, semantics, etc. For instance, the number of valid words that can follow ``\emph{cat}'' is only $O(b)$, rather than the full vocabulary size $T$. Accordingly, this leads to an effective size $O(mb^{K-1})$ for the new token contexts. Following our discussion in Section~\ref{sec one 2 s}, this suggests a sparsity level of approximately $s=O(Kb)$.

Additionally, although the transition matrix is of size $O(Tb^{K-1})\times O(Tb^{K-1})$, the same linguistic constraints implies that each transition probability vector ($\pb^{(\cdot)}$) is highly sparse, containing only $O(b)$ nonzero entries. 

\section{Experiments}\label{sec exp}
\subsection{New Tokens as Arithmetic Operators}\label{sec exp single}

To empirically validate the Markovian framework for knowledge expansion, we design a controlled experiment in which a semantically-null token (denoted by $\li\text{spec}\ri$) is mapped to a precise functional role (arithmetic multiplication), e.g., $a\li\text{spec}\ri b=a\times b$.
The extended vocabulary becomes $\Uc=\left\{\li\text{spec}\ri\right\}$ with $|\Uc|=m=1$.
This setup enables a clean evaluation of two central properties of our token-to-dictionary mapping strategy: (i) \emph{sample efficiency} in recovering the correct operator, and (ii) strict \emph{preservation} of pre-existing computational structure.

\smallskip
\noindent\textbf{Experimental setup.} We conduct all the experiments in this section using {Llama-3.2-3B-Instruct} model \citep{liu2025spinquantllmquantizationlearned}. Let $\langle \text{spec} \rangle$ denote a previously unseen special token, implemented as \texttt{<|reserved\_special\_token\_0|>} in Llama-3.2.
We then construct an arithmetic dataset in with each (input, output) pair takes the form of (``$a\li\text{spec}\ri b=$'', ``$a\times b$''), where $a, b \leq100$ are positive integers and $\li\text{spec}\ri$ is intended to function as a multiplication operator. For example, the input “$20\langle \text{spec} \rangle26=$” corresponds to the target output ``$520$''.
In our experiments, we vary the number of training samples $N$ from $100$ to $8000$. All results are evaluated on a fixed test set of $1000$ examples, which is strictly disjoint from the training data.



To realize the token-to-vocabulary mapping strategy as discussed in Section~\ref{sec:embed as rep} and following Definition~\ref{def:alpha}, we propose a simple \emph{embedding tuning} (ET) optimization strategy, where during training, only the embedding vector corresponding to the special token $\li\text{spec}\ri$ is trained, resulting in only 3,072 trainable parameters out of approximately 3B total parameters.
For comparison, we consider the standard \emph{full finetuning} (FFT) where all model parameter are updated. The experimental results are presented in Figure~\ref{fig:accuracy_trend}.




\begin{figure}[t]
\begin{minipage}{0.46\linewidth}
    \centering
    \includegraphics[width=\linewidth]{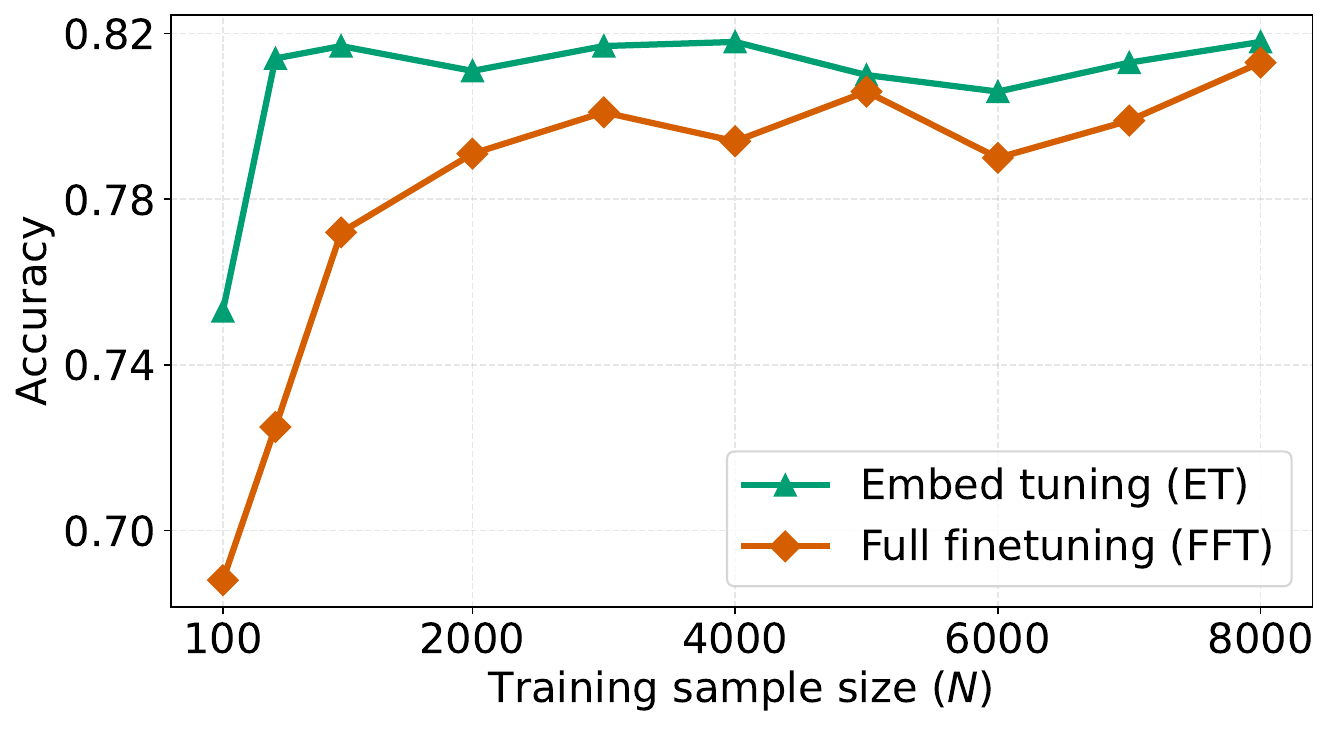} %
    \caption{\small Accuracy of the special token operation $a \langle \text{spec} \rangle b=a\times b$ across different training sample sizes $N$. The ET method consistently outperforms FFT in low-data regimes, demonstrating its improved sample efficiency.}
    \label{fig:accuracy_trend}
\end{minipage}
\hfill
\begin{minipage}{0.5\linewidth}
{
\small
\setlength{\tabcolsep}{2pt}
\centering
\captionof{table}{
\small Accuracy on $a\langle \text{spec} \rangle b$, $a*b$, and $a+b$ under embedding tuning (ET) and full fine-tuning (FFT) when $N=1000$. While FFT improves performance on multiplication, it causes catastrophic forgetting on addition, collapsing accuracy from 100\% to 0\%. In contrast, ET preserves performance on all pre-existing tasks, demonstrating zero forgetting.
}
\label{tab:arithmetic_results}
\begin{tabular}{l|c|ccc}
\toprule
\multirow{2}{*}{\textbf{Method}}
& \textbf{Accuracy} & \multicolumn{3}{c}{\textbf{Forgetting}}\\
& {$a \langle \text{spec} \rangle b$=$ab$} & {$a$$*$$b$=$ab$} & {$a$+$b$=$a$+$b$} & {$a$+$b$=$ab$}\\
\midrule
{Base model} & 4.8\% & 63.5\% & 100.0\%  & 0.0\% \\
{FFT} & 77.2\% & 77.0 \% & 0.0\% & 76.2\% \\
{ET} & 81.4\% & 63.7\% & 100.0\% & 0.0\% \\
\bottomrule
\end{tabular}
}
\end{minipage}
\end{figure}

\paragraph*{$\bullet$ Evidence of sample efficiency.} Figure~\ref{fig:accuracy_trend} illustrates the prediction accuracy of the new token $\langle \text{spec} \rangle$ when used as a multiplication operator across a wide range of training sample sizes $N$. We observe that embedding tuning consistently outperforms full fine-tuning across all regimes. Notably, embedding tuning achieves strong performance with as few as $N \approx 500$ training samples.
%
%
Based on our Markovian framework as described in Sections~\ref{sec setup} and \ref{sec main}, embedding tuning directly optimizes the transition vector associated with the new token by mapping it into the existing tokens that are semantically related to multiplication operation (e.g., the token `$*$'). 

\paragraph*{$\bullet$ Evidence of zero forgetting.}
To evaluate catastrophic forgetting after finetuning, we track the performance changes on both multiplication ($a*b$) and addition ($a+b$). The results are summarized in Table~\ref{tab:arithmetic_results}, from which we make the following observations:
\begin{enumerate}
    \item FFT increases the base model's accuracy on the standard multiplication task $a*b$, whereas ET preserves the original accuracy (1st column of Forgetting). This behavior is expected: FFT updates all model parameters, so learning $\langle \text{spec} \rangle$ also alters existing arithmetic representations. In contrast, ET leaves the original transitions unchanged and therefore preserves the base model’s performance.
\item We further evaluate performance on the addition task $a+b$, for which the base model achieves 100\% accuracy on the test set. Under FFT, the addition accuracy collapses from 100\% to 0\%, indicating severe catastrophic forgetting: while learning $\li\text{spec}\ri$ as a multiplication operator, the model entirely loses its ability to perform addition. In contrast, ET maintains 100\% accuracy, demonstrating zero forgetting.
\item To better understand the behavior of FFT, we additionally measure the accuracy when the model is evaluated under the criterion that the output of $a+b$ equals $a\times b$ rather than $a+b$. As shown in the last column of Table~\ref{tab:arithmetic_results}, this accuracy closely matches that of both $a\langle \text{spec} \rangle b$ and $a*b$. This indicates that FFT has overwritten the original arithmetic manifold, causing the model to systematically apply multiplicative logic even to addition prompts.
\end{enumerate}


\paragraph*{$\bullet$ Implicit token-to-dictionary mapping.}
The results in Table~\ref{tab:arithmetic_results} show that after applying embedding tuning, the accuracy of the new token operation $a\li\text{spec}\ri b$ (81.4\%) exceeds that of the standard multiplication token $a*b$ (63.5\%). 
%
To investigate the underlying reason, we consider alternative contexts that also express multiplication.
Specifically, the base model achieves individual accuracies of 63.5\% for ``$a*b$'', 65.2\% for ``$a\times b$'', 60.6\% for ``$a \text{ times } b$'', and 58.4\% for ``$a \text{ multiplies } b$''. 
While each formulation yields only moderate accuracy, their ensemble accuracy reaches 73.2\%. 
This explains why the final $a\li\text{spec}\ri b$ accuracy surpasses that of the token $*$.
Moreover, it provides empirical support for the sparse combination hypothesis introduced in Section~\ref{sec one 2 s}: rather than learning a one-to-one mapping to the `$*$' token, the embedding of $\langle \text{spec} \rangle$ is optimized as a sparse representation over a semantic dictionary of multiplication-related states, leading to improved performance.

\subsection{Vocabulary Expansion with Synthetic Words}\label{exp syn}

In this section, we run experiments to evaluate the ability of language models to efficiently integrate \emph{novel synthetic vocabulary} without forgetting previously learned knowledge. 


\begin{figure}[t]
\begin{minipage}{0.49\linewidth}
    \centering
    \includegraphics[width=\linewidth]{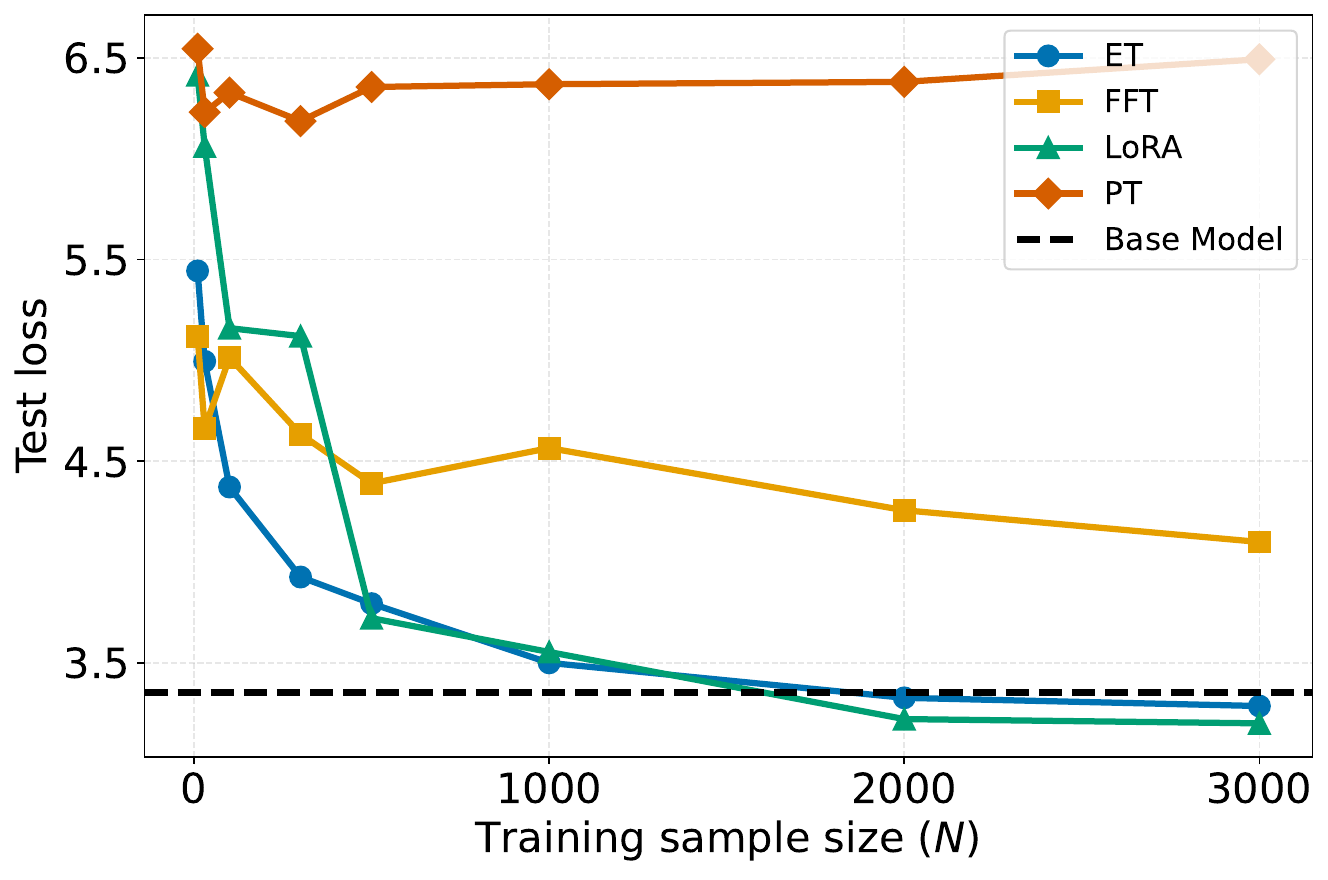} %
\end{minipage}
\hfill
\begin{minipage}{0.49\linewidth}
    \centering
    \includegraphics[width=\linewidth]{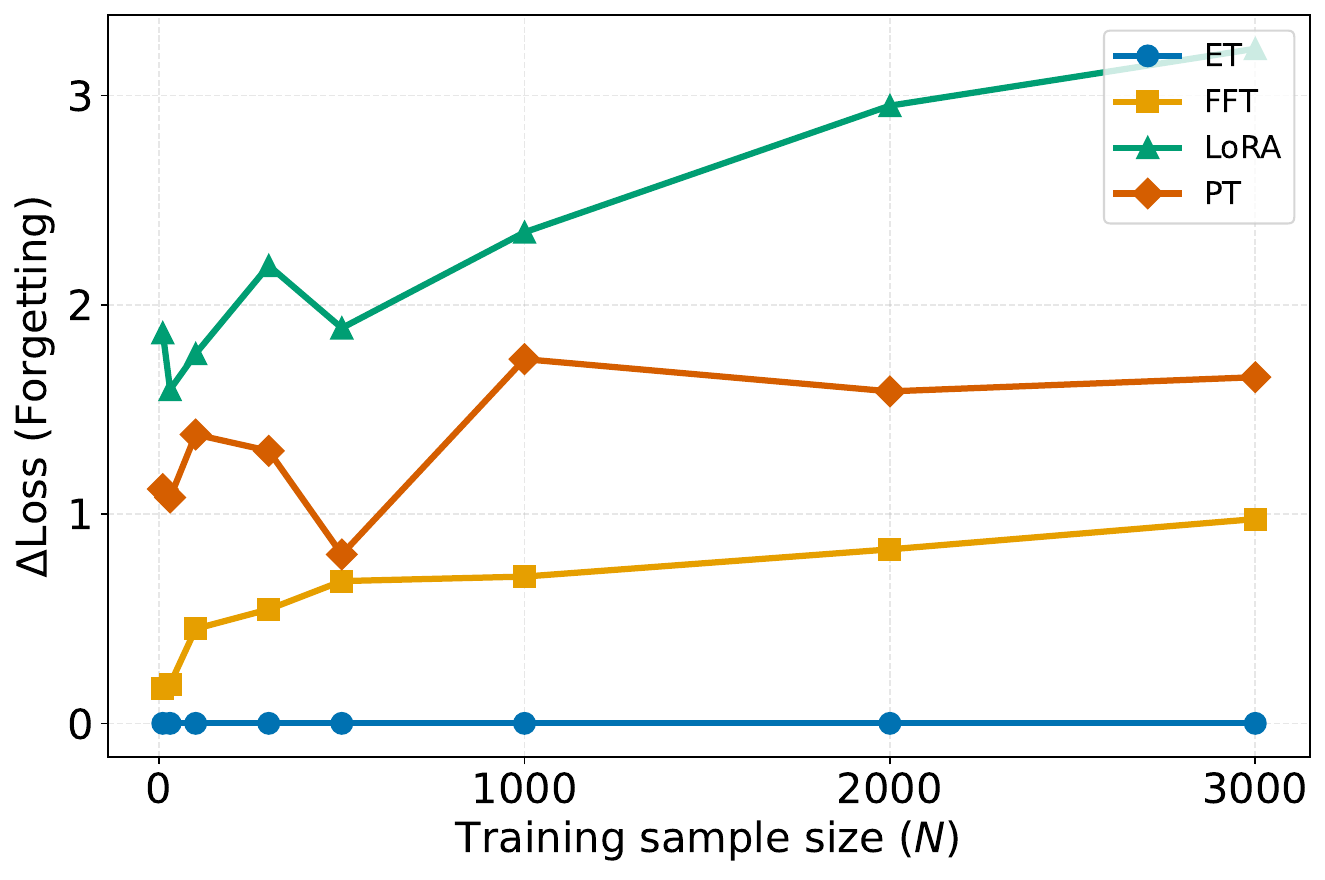} %
\end{minipage}
\caption{\small Synthetic vocabulary learning and forgetting. \textbf{Left:} Test loss versus the number of synthetic training sentences $N$; the dashed line denotes the base model’s performance on real sentences. \textbf{Right:} Forgetting across methods and sample sizes, where embedding tuning achieves zero forgetting while other methods incur increasing forgetting as $N$ grows. Code repo: \href{https://github.com/Kp759/Synthetic-vocab-expansion.git}{Synthetic-vocab-expansion}}
    \label{fig:fake word}
\end{figure}

\smallskip
\noindent\textbf{Experimental setup.} We construct a benchmark consisting of \emph{100 synthetic words/tokens}, corresponding to $\Uc=\{u_1,u_2,\cdots,u_{100}\}$ where $|\Uc|=100$.
These words are injected into language sentences following natural language sentences using Phi-3.5 Mini Instruct \citep{haider2024phi3safetyposttrainingaligning}, ensuring that the surrounding linguistic context remains realistic while the token itself carries no prior semantic meaning. As a running example, consider the following pair of sentences:

\begin{center}
\small
\textbf{Real:}
\emph{The fiber in \textcolor{blue}{broccoli} helps regulate blood \textcolor{blue}{sugar} levels.}

\medskip

\noindent\textbf{Synthetic:}
\emph{The fiber in \textcolor{red}{glor} helps regulate blood \textcolor{red}{zorp} levels.}
\end{center}
Here, we treat ``broccoli'' and ``sugar'' as real tokens/words, whereas
``glor'' and ``zorp'' are newly introduced fake tokens with
no prior occurrences in the pretraining corpus.
More details on synthetic sentences generation are discussed in Appendix~\ref{app fake data generate}.
%
%
To assess catastrophic forgetting, we concurrently evaluate model performance on the WikiText database. We define forgetting  as the drop in WikiText performance, measured by the increase in loss before and after adapting the model to the synthetic sentences.

In addition to the embedding tuning (ET) and full fine-tuning (FFT) algorithms described in Section~\ref{sec exp single}, we also consider \emph{low-rank adaptation} (LoRA) and \emph{prompt tuning} (PT) in this section.
For prompt tuning, a fixed number of \emph{continuous} prompt embeddings are prepended to the input sequence and optimized, while the backbone model remains frozen.





In Figure~\ref{fig:fake word}, we vary the number of synthetic training sentences ($N$) from 10 to 3000 and tune the LLaMA-3.2-1B model \citep{patterson2022carbonfootprintmachinelearning} for 100 epochs. Performance is evaluated using token-level cross-entropy loss on a fixed test set of 1000 synthetic sentences.
Specifically, Figure~\ref{fig:fake word} 
\ifarxiv
(left) 
\else
(top)
\fi
reports the test loss, where the horizontal black dashed line indicates the base model’s performance on real sentences. As the number of training samples increases (e.g., $N=3000$), the test loss converges to this baseline. Figure~\ref{fig:fake word} 
\ifarxiv
(right)
\else
(bottom)
\fi
shows the extent of forgetting under different training algorithms and sample sizes. Consistent with our theoretical findings, embedding tuning, which corresponds to the token-to-dictionary mapping strategy, achieves strong performance without inducing any forgetting. In contrast, forgetting is unavoidable for the other three methods and becomes more pronounced as the number of training samples increases.




{
\setlength{\tabcolsep}{6pt}
\begin{table*}[t]
\centering
\caption{\small Comparison of adaptation methods across models  with $N=1000$ synthetic training sentences, presenting
test loss and forgetting (measured as WikiText loss increase after tuning) across different pretrained models and tuning methods. 
}
\label{tab:model_comparison_k1000}
\small 
\begin{tabular}{l|cccc|c|cccc}
\toprule
\multirow{2}{*}{\textbf{Model}}
&\multicolumn{4}{c|}{\textbf{Performance (loss)}}& \multicolumn{5}{c}{\textbf{Forgetting}}
\\

& {FFT} 
& {LoRA} 
& {PT}
& {ET} 
&Base Wiki loss
& {FFT} 
& {LoRA} 
& {PT}
& {ET} 
\\
\midrule
{Phi-3.5 Mini Instruct} 
& 3.28 & 6.01 & \textbf{1.87} & 2.75&  2.43& +9.10 & +8.80 & +0.39&\textbf{.00}\\


{Llama3.2-1B} 
& 4.57 & 3.55 & 6.37& \textbf{3.50} &  2.86& +0.70 & +2.34 & +1.56&\textbf{.00}\\

{Llama3.2-3B}
&4.54 & 4.08 & 5.76 & \textbf{3.47}
&2.62 & +0.51 &+1.35 & +0.62 & \textbf{.00}
\\

{Llama3.1-8B} 
& 4.40 & 7.63 & 5.86& \textbf{3.79} &  2.42& +1.24 & +8.36 & +0.69 &\textbf{.00}\\




{Qwen2.5-1.5B}
&4.81 & 3.72 & 4.03 & \textbf{3.46} & 2.73 & +11.59 & +1.60 & +2.19&\textbf{.00}\\

{Qwen2.5-3B}
& 5.14 & 3.45 & 4.71 & \textbf{3.39} 
& 2.59 &+4.71 & +1.13 &+0.06 &\textbf{.00}\\

{Qwen2.5-7B}
& 5.22 & 3.86 & 4.62 & \textbf{3.37} & 2.43 &+7.37 & +8.60 & +0.01 & \textbf{.00} \\

\bottomrule
\end{tabular}
\end{table*}
}

Additional results are reported in Table~\ref{tab:model_comparison_k1000}, where we evaluate four different pretrained models using $N=1000$ synthetic training sentences.  
Base Wiki loss denotes the token-level cross-entropy on the WikiText dataset for each pretrained model, and forgetting is measured as the increase in WikiText loss after tuning.  
Among all methods, ET achieves competitive test loss while inducing zero forgetting on WikiText, confirming that restricting updates to newly introduced token embeddings enables effective acquisition of novel vocabulary without disrupting previously learned language representations. In contrast, FFT and LoRA exhibit substantially higher test loss and severe forgetting, with loss increases exceeding $8.0$ in some settings. PT attains competitive test loss for certain models, such as Phi-3.5 Mini Instruct  but shows inconsistent forgetting behavior across architectures. A likely explanation is that the synthetic dataset is generated using Phi-3.5 Mini Instruct itself, so prompt tuning can be particularly well aligned with the data generation process and thus yield lower loss.
Overall, these results demonstrate that ET provides the most favorable trade-off between sample-efficient vocabulary expansion and strict preservation of prior knowledge, empirically validating our theoretical findings.
To further assess practical relevance beyond synthetic vocabulary injection, we next evaluate the same adaptation methods in real-world cross-lingual vocabulary-expansion settings ranging from relatively close language pairs (Spanish and German) to a substantially more distant one (Arabic).

\subsection{Cross-lingual Vocabulary Expansion}

\begin{table}[t]
\centering
\caption{\small English-to-Spanish, English-to-German, and English-to-Arabic vocabulary expansion results on Qwen2.5-3B.  Forgetting is measured as the increase in English loss after adaptation.}
\label{tab:es_de_results}
\small 
\begin{tabular}{l|cccc|c|cccc}
\toprule
\multirow{2}{*}{\textbf{Target language}}
& \multicolumn{4}{c|}{\textbf{Performance (loss)}}
& \multicolumn{5}{c}{\textbf{Forgetting}} \\
& {FFT} & {LoRA} & {PT} & {ET}
&
{{Base Eng. loss}} & {FFT} & {LoRA} & {PT} & {ET} \\
\midrule
Spanish & 5.56 & 3.17 & 3.45 & \textbf{2.30} & 2.01 & +9.83 & +0.66 & -0.03 & \textbf{-0.04} \\
German  & 6.80 & 3.29 & 3.63 & \textbf{3.27} & 2.01 & +9.81 & +0.49 & -0.03 & \textbf{-0.08} \\
Arabic  & 7.95 & 3.95 & 4.34 & \textbf{2.82} & 2.01 & +11.95 & +0.46 & +0.06 & \textbf{0.00} \\
\bottomrule
\end{tabular}
\end{table}

To complement the controlled synthetic setting in Section~\ref{exp syn}, we additionally evaluate our method in a real-world cross-lingual vocabulary-expansion setting.
We consider three target languages with different degrees of distance from English: Spanish and German as relatively closer language pairs, and Arabic as a substantially more distant one. This progression allows us to assess whether the proposed method continues to provide strong adaptation and low forgetting as the lexical gap increases.

\smallskip
\noindent\textbf{Experimental setup.}
For each target language, namely Spanish, German, and Arabic, we construct monolingual corpora from Wikipedia and retain examples with at least 200 characters to ensure sufficient local context. We use the same data split for all three target languages: 6,000 target-language training examples, 1,000 held-out target-language test examples, and 1,000 held-out English Wikipedia examples for measuring source-language retention. In the reported experiments, we use $K=1000$ target-language training examples for adaptation, and evaluate target-language acquisition and English forgetting on the corresponding held-out test sets. All examples are tokenized with a maximum sequence length of 256 tokens.
We use Qwen2.5-3B as the backbone model


The results are reported in Table~\ref{tab:es_de_results}. Across Spanish, German, and Arabic, ET achieves the strongest overall trade-off between target-language adaptation and English retention. In particular, ET obtains the lowest target-language loss in the Qwen2.5-3B experiments while maintaining near-zero, and in some cases slightly negative, English forgetting. These results show that the proposed token-to-dictionary mapping strategy extends beyond synthetic settings and remains effective in realistic vocabulary-expansion scenarios.
The slightly negative forgetting values are an interesting observation. This may be because Spanish and German are more closely aligned with English than Arabic, and learning these languages could therefore enhance the model’s original performance. The effect may also depend on the backbone model, training dynamics, tokenizer, and evaluation set. Further exploration of this phenomenon would be valuable.
Additional Arabic results across Qwen2.5 and Llama3.2 backbones are also provided in Appendix~\ref{app:crosslingual_results}. 
\section{Related Work}\label{sec:related}

\paragraph*{Continual Learning and Forgetting. }
Catastrophic forgetting remains a central challenge in adapting large language models. Classical continual learning methods, such as EWC, GEM, and A-GEM, mitigate forgetting in smaller models but are difficult to scale effectively to modern LLMs \citep{Kirkpatrick2017,LopezPaz2017,Chaudhry2019,wu2024continuallearninglargelanguage,delange2021continual,shi2025continual}. Parameter-efficient fine-tuning methods, including LoRA, prefix tuning, and adapters, reduce the number of updated parameters, but do not by themselves guarantee retention of prior behavior \citep{hu2021loralowrankadaptationlarge,li-liang-2021-prefix,liu-etal-2022-p,Pfeiffer2021,gupta-etal-2024-model,bafghi2024parameter}. Recent methods such as OPLoRA and MoFO further reduce interference through constrained or sparse updates \citep{xiong2025oplora,chen2024mofo}, but a general theoretical account of when sparse updates prevent forgetting remains limited.

\paragraph*{LLMs as Markov Processes.}
Autoregressive LLMs can be viewed as Markov processes in which next-token probabilities define transition dynamics over tokens or token histories \citep{zekri2025largelanguagemodelsmarkov,ildiz2024self}. Related work has also studied transformers on Markov data, MCMC-based latent reasoning, and Markov-process interpretations of reasoning and RLHF \citep{NEURIPS2024_f8a20700,hoffman2023training,zhang2025bread,kim2025metastable}. These perspectives motivate our use of a Markov transition framework for analyzing stability, forgetting, and knowledge integration in vocabulary expansion.

\paragraph*{Vocabulary Expansion. }
Our work is related to vocabulary expansion, cross-lingual adaptation, and sparse representation learning. Prior work shows that pretrained models can be extended to new languages or lexical inventories by learning target-language embeddings, shallow alignment layers, or language-specific modules \citep{artetxe2020cross,marchisio2023mini,pfeiffer2020mad,pfeiffer2022lifting}. Domain and multimodal adaptation methods have also studied explicit vocabulary expansion, including selective domain-token addition and vision-vocabulary scaling \citep{liu-etal-2024-gold,wei2024vary,yamaguchi2025can}. In contrast, we formulate vocabulary expansion in autoregressive LMs as state-space expansion in a Markov process and analyze the sample complexity of learning new tokens through token-to-dictionary mappings. This connects vocabulary expansion to sparse coding, where the cost of learning a new token depends on the sparsity of its representation rather than the full vocabulary size \citep{olshausen1997sparse}.

\paragraph*{Knowledge Retention via Embedding Updates. } 
Several lines of work aim to preserve prior knowledge during model adaptation, including model editing, sparse updates, and source-shielded updates \citep{Meng2022,gupta-etal-2024-model,chen2025mofomomentumfilteredoptimizermitigating,yamaguchi2025mitigatingcatastrophicforgettingtarget}. Another related direction improves language plasticity during pretraining, for example through active forgetting by periodically resetting embeddings \citep{chen2023improving}. Our setting differs from these approaches: we study post-training vocabulary expansion, where only newly introduced token representations are updated. Under our Markovian formulation, this preserves the original vocabulary-to-vocabulary transition structure, yielding a formal zero-forgetting guarantee for the original token space.

\section{Discussion}\label{sec discuss}

Our work links language generation to Markov transition dynamics, allowing new knowledge expansion to be formalized as an expansion of the state space of an existing Markov chain. We consider the setting in which newly introduced tokens can be mapped to a small number of preexisting tokens. Under this setting, (i) learning new tokens is sample-efficient, with the required number of samples scaling linearly with the number of mapped tokens, and (ii) no forgetting is introduced through this integration. We further propose an embedding tuning algorithm that implements this mapping strategy in practice. Both theoretical guarantees and empirical evidence support our claims.


\paragraph*{Limitations and future directions.}
Our theoretical analysis assumes that newly introduced tokens occur with equal probability. In practice, token frequencies are more complex and typically depend on the existing tokens to which they are mapped. Moreover, our framework assumes sufficiently expressive LLMs. An interesting direction for future work is to study the scaling behavior when model capacity is limited, as well as to explore trade-offs among different adaptation algorithms under finite-capacity regimes.

\bibliography{refs}
\bibliographystyle{icml2026}




\newpage
\addtocontents{toc}{\protect\setcounter{tocdepth}{3}}
\tableofcontents
\appendix

\section{Proofs in Section~\ref{sec main}}
\subsection{Supporting Lemma}
\begin{lemma}\label{lemma:min count}Consider the training dataset as in Definition~\ref{def dataset}. 
Suppose  Assumption~\ref{assum uniform} holds. 
Define the minimum empirical count 
\begin{align}
N_{\min}:=\min_{u\in\Uc}N_u\quad\text{where}\quad N_u=\sum_{i\in[N],j\in[t_i]}\onebb\{x^{(i)}_j=u\}.\label{def min N}
\end{align}
Recall $m=|\Uc|$. Then for any $\eps\in(0,1)$, we have for any $u\in\Uc$,
\begin{align}
\P\left(N_u\ge (1-\eps)\frac{N}{m}\right)\ge 1-\exp\left(-\frac{\eps^2}{2}\cdot\frac{N}{m}\right).\label{lemma union 2}
\end{align}
and 
\begin{align}
\P\left(N_{\min}\ge (1-\eps)\frac{N}{m}\right)\ge 1-m\cdot\exp\left(-\frac{\eps^2}{2}\cdot\frac{N}{m}\right).\label{lemma union 1}
\end{align}
\end{lemma}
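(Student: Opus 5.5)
The plan is to reduce $N_u$ to a sum of independent Bernoulli variables and then apply a multiplicative Chernoff lower-tail bound. By Definition~\ref{def dataset}, each sequence $i\in[N]$ contains at least one position $j_i<t_i$ with $x^{(i)}_{j_i}\in\Uc$. Fix such a designated position for every sequence; for concreteness take the first occurrence. Define
\[
Z_i^{(u)}:=\onebb\{x^{(i)}_{j_i}=u\}.
\]
Every occurrence at a designated position is also counted in $N_u$, so $N_u\ge \sum_{i\in[N]}Z_i^{(u)}$. By the first bullet of Assumption~\ref{assum uniform}, conditioned on $x^{(i)}_{j_i}\in\Uc$, the token is uniform on $\Uc$. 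Hence $Z_i^{(u)}\sim\mathrm{Bernoulli}(1/m)$. Assumption~\ref{assum uniform} is read as saying that the sequences supply independent occurrences, so the $Z_i^{(u)}$ are independent across $i$. It follows that $S_u:=\sum_i Z_i^{(u)}$ is $\mathrm{Binomial}(N,1/m)$ with mean $\mu=N/m$.

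Next, apply the standard multiplicative Chernoff lower-tail bound to $S_u$:
\[
\P\bigl(S_u\le(1-\eps)\mu\bigr)\le \exp(-\eps^2\mu/2), \qquad \eps\in(0,1).
\]
Since $N_u\ge S_u$, the event $\{N_u<(1-\eps)N/m\}$ is contained in $\{S_u<(1-\eps)N/m\}$. This gives \eqref{lemma union 2} directly. If one prefers to work with $S_u$ alone, the same argument goes through verbatim.

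For \eqref{lemma union 1}, note that $\{N_{\min}<(1-\eps)N/m\}=\bigcup_{u\in\Uc}\{N_u<(1-\eps)N/m\}$. A union bound over the $m$ new tokens, combined with \eqref{lemma union 2} for each $u$, yields the failure probability $m\exp(-\eps^2N/(2m))$.

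The main obstacle is justifying independence and the exact Bernoulli law. Sequences may contain multiple, possibly dependent, occurrences of new tokens, and the assumption is stated only conditionally on a position being in $\Uc$. The device of restricting to one designated position per sequence sidesteps within-sequence dependence. This is legitimate because we only need a lower bound on $N_u$. What remains is to state explicitly that the designated new-token draws are independent across sequences, which is the content intended by Assumption~\ref{assum uniform}.
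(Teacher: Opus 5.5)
Your proof is correct, but it reaches the binomial law by a different device than the paper.

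The paper pools \emph{all} new-token occurrences in $\Dc$, say $N'=\sum_{u\in\Uc}N_u\ge N$ of them. It treats their labels as i.i.d.\ uniform on $\Uc$, so that $N_u\sim\mathrm{Binomial}(N',1/m)$, and applies the Chernoff lower tail with mean $N'/m$. It then uses $N'\ge N$ twice: to enlarge the event, $\{N_u\le(1-\eps)N/m\}\subseteq\{N_u\le(1-\eps)N'/m\}$, and to weaken the exponent, $\exp(-\eps^2N'/(2m))\le\exp(-\eps^2N/(2m))$.

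You instead keep one designated occurrence per sequence. This gives a binomial with a deterministic number $N$ of trials, and you transfer the bound through $N_u\ge S_u$. Your version buys two things. It needs independence only across sequences, not among the possibly many occurrences inside one sequence. It also sidesteps the fact that $N'$ is itself random, which the paper handles only implicitly; its argument becomes rigorous after conditioning on $N'$ and the occurrence positions, with the labels independent of that conditioning. The paper's pooling uses more data and would give a sharper bound stated in terms of $N'$. Since it weakens to $N$ at the end, your subsampling loses nothing in the final statement.

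One small point in your write-up: because $j_i$ is the \emph{first} occurrence, it is a random position. You therefore need the uniformity in Assumption~\ref{assum uniform} to hold conditionally on the selection event (the preceding tokens lying in $\Vc$), not merely on $x^{(i)}_{j_i}\in\Uc$. This is the same kind of strengthened reading of the assumption that the paper's i.i.d.\ claim relies on.

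The union-bound step for $N_{\min}$ is identical in both arguments.
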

\begin{proof}
Recall the count of token $u$ in the dataset from \eqref{def min N}, denoted by $N_u$. 
We form a dataset consisting of all tokens in the training dataset $\Dc$, satisfying $x_j^{(i)}\in\Uc$, and denote it by $\{z_i\}_{i=1}^{N'}$ where $z_i\in\Uc$ and $N'=\sum_{u\in\Uc}N_u\geq N$.
    
    Fix any $u\in\Uc$. 
    Under Assumption~\ref{assum uniform}, the random variables $\onebb\{z_i=u\}$ are i.i.d. Bernoulli with $\P(z_i=u)={1}/{m}$. Hence,
    \[
    N_u\sim\text{Binomial}\left(N',\frac{1}{m}\right),\qquad\E[N_u]=\frac{N'}{m}.
    \]
    By the Chernoff bound and following \cite{Tsun2020}, for any $\eps\in(0,1)$,
    \[
    \P\left(N_u\leq(1-\eps)\frac{N'}{m}\right)\leq\exp\left(-\frac{\eps^2}{2}\frac{N'}{m}\right).
    \]
    Since $N\leq N'$, we get
    \[
    \P\left(N_u\leq(1-\eps)\frac{N}{m}\right)\leq\P\left(N_u\leq(1-\eps)\frac{N'}{m}\right)\leq\exp\left(-\frac{\eps^2}{2}\frac{N'}{m}\right)\leq\exp\left(-\frac{\eps^2}{2}\frac{N}{m}\right).
    \]
    It proves \eqref{lemma union 2}. 
    Now applying the union bound over all $u\in\Uc$, we get
    \[
    \P\left(N_{\min}\leq(1-\eps)\frac{N}{m}\right)=\P\left(\exists u\in\Uc:N_u\leq(1-\eps)\frac{N}{m}\right)\le\sum_{u\in\Uc}\P\left(N_u\leq(1-\eps)\frac{N}{m}\right)\le m\cdot\exp\left(-\frac{\eps^2}{2}\frac{N}{m}\right).
    \]
    It proves \eqref{lemma union 1}.
\end{proof}

\subsection{Proof of Theorem~\ref{thm:one 2 one}}\label{app proof one 2 one}

\begin{proof}
Recall from Definition~\ref{def dataset}, where we are given training dataset
\[
\Dc:=\{(x_1^{(i)},x_2^{(i)},\cdots,x_{t_i}^{(i)})\}_{i=1}^N.
\]

\smallskip
\noindent\textbf{Step 1: Define the optimization objective and the algorithm for finding the optimal $\hat v(u)$ (for each $u \in \mathcal U$). }

For any $u\in\Uc$, define its number of occurrences as
\[
N_u = \sum_{i\in[N]}\sum_{j\in[t_i]}\onebb\{x^{(i)}_j=u\},
\]
where we have $\sum_{u\in\Uc}N_u\geq N$ following Definition~\ref{def dataset}.

Define its corresponding empirical distribution $\hat\qb^{(u)}\in\Delta(\Vc)$ (similar to \eqref{def emp distribution}) as
\begin{align}
\hat\qb_k^{(u)}=\frac{\sum_{i\in[N]}\sum_{j\in[t_i]}\onebb\{x_{j}^{(i)}=u,x_{j+1}^{(i)}=v_k\}}{\sum_{i\in[N]}\sum_{j\in[t_i]}\onebb\{x_{j}^{(i)}=u\}} \quad\text{for }k\in[T].
\end{align}
For simplicity and without loss of generality, let $\hat\qb=\hat\qb^{(u)}$, $\qb^\st=\qb^{(u)}$, $\pb^\st=\pb^{(v^\st(u))}$, $\hat v=\hat v(u)$ and $v^\st=v^\st(u)$. 
Here considering the one-to-one mapping in Section~\ref{sec one2one} where $\qb^{(u)}=\pb^{(v^\st(u))}$, we also have $\qb^\st=\pb^\st$.
Additionally, rewrite the collected data corresponding to $u$ as $\{z_i\}_{i=1}^{N_u}$
where following Assumption~\ref{assum uniform}, we have that $z_i\in\Vc$ is i.i.d. sampled from $\qb^\st$. 
Additionally, let $\{\text{ID}_i\}_{i=1}^{N_u}$ ($\text{ID}_i\in[T]$) be the corresponding token ID's of $\{z_i\}_{i=1}^{N_u}$ in $\Vc$ where $z_i=v_{\text{ID}_i}$ for $i\in[N_u]$.

Consider the estimator for token $u$
\[
\hat v=\arg\min_{v\in\Vc} D_{\text{KL}}(\hat\qb~||~\pb^{(v)})
\]
where we have
\begin{align*}
    \arg\min_{v\in\Vc}D_{\text{KL}}(\hat\qb~||~\pb^{(v)})&=\arg\min_{v\in\Vc}\sum_{i\in[T],\hat\qb_i\neq 0}\hat\qb_i\log\frac{\hat\qb_i}{\pb^{(v)}_i}\\
    &=\arg\min_{v\in\Vc}\left(\underset{\text{constant}}{\underbrace{\sum_{i\in[T],\hat\qb_i\neq 0}\hat\qb_i\log{\hat\qb_i}}}-\sum_{i\in[T],\hat\qb_i\neq 0}\hat\qb_i\log{\pb^{(v)}_i}\right)\\
    &=\arg\min_{v\in\Vc}\sum_{i\in[T]}-\hat\qb_i\log{\pb^{(v)}_i}\\
    &=\arg\min_{v\in\Vc}\sum_{i\in[T]}-\frac{\sum_{j\in[N_u]}\onebb\{z_j=v_i\}}{N_u}\log{\pb^{(v)}_i}\\
    &=\arg\min_{v\in\Vc}\sum_{j\in[N_u]}\sum_{i\in[T]}-\onebb\{z_j=v_i\}\log{\pb^{(v)}_i}\\
    &=\arg\min_{v\in\Vc}\sum_{i\in[N_u]}-\log{\pb^{(v)}_{\text{ID}_i}}.
\end{align*}
Then we can rewrite the estimator via
\begin{align}\label{eq estimator ori}
    \hat v=\arg\min_{v\in\Vc}\sum_{i\in[N_u]}-\log{\pb^{(v)}_{\text{ID}_i}}.
\end{align}
Note that:
\begin{enumerate}
    \item Since $\hat\qb$ follows the oral distribution $\qb^\st$, it satisfies that for any $i\in[T]$,
\[
\pb^{\st}_i>0\quad\text{where}\quad\hat\qb_i\neq0.
\]
Then, we have $D_{\text{KL}}(\hat\qb~||~\pb^{\st})<\infty$. 
\item For any $v'\in\Vc$, if there exists $i\in[T]$ such that
\[
\pb^{(v')}_i=0\quad\text{where}\quad\hat\qb_i\neq0.
\]
Then, $D_{\text{KL}}(\hat\qb~||~\pb^{(v')})=\infty$ and $\hat v\neq v'$.
\end{enumerate}
Therefore in this work, we focus only on the nonzero ${\pb^{(v)}_{\text{ID}_i}}$ in \eqref{eq estimator ori}. Let 
\begin{align*}
\Vc'&=\{v\in\Vc\ :\  D_{\text{KL}}(\hat\qb~||~\pb^{(v)})<\infty\}\\
&=\{v\in\Vc\ :\  \pb^{(v)}_{\text{ID}_i}\neq0,\forall i\in[N_u]\}.
\end{align*}
Then, the following estimator is equivalent to \eqref{eq estimator ori}:
\begin{align}\label{eq estimator}
    \hat v=\arg\min_{v\in\Vc'}\sum_{i\in[N_u]}-\log{\pb^{(v)}_{\text{ID}_i}}.
\end{align}

\smallskip
\noindent\textbf{Step 2: Reformulate the problem using random variables and derive concentration bounds (for each $u\in\Uc$).}

For any $v\in\Vc'$,
define random variables
\[
Z_i:=-\log \pb^{(v)}_{\text{ID}_i}-(-\log \pb^{\st}_{\text{ID}_i})=\log \pb^{\st}_{\text{ID}_i}-\log \pb^{(v)}_{\text{ID}_i}.
\]
By Assumption~\ref{assum pos}, we have $|Z_i|\leq |\log c|=:C$. 

Since $z_i$'s are i.i.d. sampled following $\pb^\st$, we have that 
\begin{align}
\E[Z_i]=D_{\text{KL}}(\pb^{\st}~||~\pb^{(v)}).\label{eq expect KL}
\end{align}

Recall the estimator from \eqref{eq estimator}. To identify the correct token $v^\st$, we need to ensure that 
\begin{align*}
\sum_{i\in[N_u]}-\log{\pb^\st_{\text{ID}_i}}\leq \sum_{i\in[N_u]}-\log{\pb^{(v)}_{\text{ID}_i}}\quad\Longrightarrow\quad&\sum_{i\in[N_u]}\log{\pb^\st_{\text{ID}_i}}- \log{\pb^{(v)}_{\text{ID}_i}}\geq0\\
\Longrightarrow\quad&
\sum_{i\in[N_u]}Z_i\geq0.
\end{align*}
Given that $|Z_i|\leq C$, by Hoeffding's inequality, for any $t>0$, we get
\[
\P\left(\frac{1}{N_u}\sum_{i\in[N_u]}Z_i-\E[Z_i]\leq -t\right)\leq \exp\left(-\frac{N_ut^2}{2C^2}\right)\quad\Longrightarrow\quad\P\left(\frac{1}{N_u}\sum_{i\in[N_u]}Z_i\leq \E[Z_i]-t\right)\leq \exp\left(-\frac{N_ut^2}{2C^2}\right).
\]
Recall the separation margin $\gamma$ from Definition~\ref{def:margin} where we have
\[
\gamma:=\min_{i,j\in[T],i\neq j}D_{\text{KL}}(\pb^{(v_i)}~||~\pb^{(v_j)}).
\]
Given \eqref{eq expect KL}, we get
\[
\E[Z_i]
\begin{cases}
    \geq\gamma>0&\text{if } v\in\Vc',v\neq v^\st\\
    =0 & \text{if } v= v^\st
\end{cases}.
\]
Choose $t=\E[Z_i]$. 
Then we get for any $v\neq v^\st$
\[
\P\left(\frac{1}{N_u}\sum_{i\in[N_u]}Z_i\leq 0\right)\leq \exp\left(-\frac{N_u(\E[Z_i])^2}{2C^2}\right)\leq\exp\left(-\frac{N_u\gamma^2}{2C^2}\right).
\]
Applying union bound over all $v\in\Vc'\setminus \{v^\st\}$,
we get 
\begin{align}\label{eq union over v}
\P(\hat v\neq v^\st)\leq |\Vc'|\cdot\exp\left( -\frac{N_u\gamma^2}{2C^2}\right)\leq T\cdot\exp\left( -\frac{N_u\gamma^2}{2C^2}\right).
\end{align}

\smallskip
\noindent\textbf{Step 3: Apply a union bound over all $u\in\Uc$.}

So far, we have established the probability of successfully retrieving $v^\st(u)$ for each $u \in \mathcal U$. We now consider the event that retrieval is successful simultaneously for all $u \in \mathcal U$.

Applying Lemma~\ref{lemma:min count}, 
we have that with probability at least $1-\exp(-N/8m)$ (via choosing $\eps=0.5$),
\[
N_u\geq N/2m.
\]
Combining with \eqref{eq union over v} returns that 
\begin{align*}
\P(\hat v\neq v^\st)&\leq T\cdot\exp\left( -\frac{N\gamma^2}{4mC^2}\right)
+\exp\left(-\frac{N}{8m}\right)\\
&\leq2T\cdot\exp\left(-\frac{N\min\{\gamma^2/C^2,1\}}{8m}\right)
\end{align*}
Applying another union bound over all $u\in\Uc$ ($|\Uc|=m$), we have that
\[
\P\left(\exists u\in\Uc~:~\hat v(u)\neq v^\st(u)\right)
\leq 2mT\cdot\exp\left(-\frac{N\cdot\min\{\gamma^2/C^2,1\}}{8m}\right).
\]
%
%
%
Equivalently, with probability at least $1-\delta$, 
\[
\forall u\in\Uc~:\quad v^\st(u)=\hat v(u)
\]
when 
\[
N\geq \frac{1}{\min\{\gamma^{2}/C^2,1\}}\cdot8m\log\frac{2mT}{\delta}.
\]
Setting $C=|\log c|$ completes the proof.


\end{proof}

\subsection{Proof of Theorem~\ref{thm:one 2 s}}\label{app sec sparse}
\begin{proof} 

Recall from Definition~\ref{def:alpha} and Assumption \ref{assmu s sparse}. For any $u\in\Uc$, we have
    \[
\qb^{(u)}=f(\Eb^\top\bal^{(u)})\quad\text{where}\quad\bal^{(u)}\in\Ac\quad\text{and}\quad
\Ac:=\{\bal\in\R^T~:~\tn{\bal}_0\leq s,~\tn{\bal}_2\leq B\}.
\]
Following \eqref{def min N}, let 
\[
N_u = \sum_{i\in[N]}\sum_{j\in[t_i]}\onebb\{x^{(i)}_j=u\},
\]
and the empirical distribution $\hat\qb^{(u)}\in\Delta(\Vc)$ from \eqref{def emp distribution}
\[
\hat\qb_k^{(u)}=\frac{\sum_{i\in[N]}\sum_{j\in[t_i]}\onebb\{x_{j}^{(i)}=u,x_{j+1}^{(i)}=v_k\}}{\sum_{i\in[N]}\sum_{j\in[t_i]}\onebb\{x_{j}^{(i)}=u\}}\quad\text{for }k\in[T]
\]
where $\sum_{k\in[T]}\hat\qb^{(u)}_k=1$.

\smallskip
\noindent\textbf{Step 1: Define the optimization objective and the algorithm for finding the optimal $\hat v(u)$ (for each $u \in \mathcal U$). }

Fix any $u\in\Uc$. Similar to the proof of Theorem~\ref{thm:one 2 one}, let the samples corresponding to $u$ be $\{z_i\}_{i=1}^{N_u}$, which are i.i.d. from $\qb^{(u)}$. Additionally, let $\{\text{ID}_i\}$ be the associated token ID's. For any $\bal\in\Ac$, define 
\[
\qb^{(\bal)}:=f(\Eb^\top\bal)
\]
%
%
For simplicity, let $\hat\qb=\hat\qb^{(u)}$, $\qb^\st=\qb^{(u)}$, $\hat \bal=\hat \bal^{(u)}$ and $\bal^\st=\bal^{(u)}$. 

We consider the KL estimator (as in \eqref{eq est alpha})
\begin{align*}
\hat\bal \in \arg\min_{\bal\in\Ac}D_{\text{KL}}\left(\hat\qb~||~\qb^{(\bal)}\right),
\end{align*}
where we have, 
\begin{align*}
    \arg\min_{\bal\in\Ac}D_{\text{KL}}\left(\hat\qb~||~\qb^{(\bal)}\right)&=\arg\min_{\bal\in\Ac}\sum_{i\in[T]}\hat\qb_i\log\frac{\hat\qb_i}{\qb_i^{(\bal)}}\\
    &=\arg\min_{\bal\in\Ac}\left(\underset{\text{constant}}{\underbrace{\sum_{i\in[T]}\hat\qb_i\log\hat\qb_i}}-\sum_{i\in[T]}\hat\qb_i\log\qb^{(\bal)}_i\right)\\
    &=\arg\min_{\bal\in\Ac}\sum_{i\in[T]}-\hat\qb_i\log\qb^{(\bal)}_i\\
    &=\arg\min_{\bal\in\Ac}\sum_{i\in[N_u]}-\log \qb^{(\bal)}_{\text{ID}_i}.
\end{align*}
Then we can rewrite the estimator via
\begin{align}
\hat\bal\in\arg\min_{\bal\in\Ac}\sum_{i\in[N_u]}-\log \qb^{(\bal)}_{\text{ID}_i}.\label{eq:alpha-erm ori}
\end{align}

Following the similar analysis as in the proof of Theorem~\ref{thm:one 2 one}, for some $\bal'\in\Ac$, if there exists $k\in[T]$ such that 
\[
\qb_{k}^{(\bal')}=0\quad\text{where}\quad\hat\qb_{k}\neq0, 
\]
then $\sum_{i\in[N_u]}-\log \qb^{(\bal')}_{\text{ID}_i}=\infty$ and $\hat\bal\neq\bal'$.

Therefore in this work, we focus only on the nonzero ${\qb^{(\bal)}_{\text{ID}_i}}$ in \eqref{eq:alpha-erm ori}. Let 
\begin{align*}
\Ac'
&=\{\bal\in\Ac\ :\  \qb^{(\bal)}_{\text{ID}_i}\neq0,\forall i\in[N_u]\}.
\end{align*}
Then, the following estimator is equivalent to \eqref{eq:alpha-erm ori}:
\begin{align}
\hat\bal\in\arg\min_{\bal\in\Ac'}\sum_{i\in[N_u]}-\log \qb^{(\bal)}_{\text{ID}_i}.\label{eq:alpha-erm}
\end{align}

\smallskip
\noindent\textbf{Step 2: Reformulate the problem using random variables and derive concentration bounds (for each $u\in\Uc$).}

For any $\bal\in\Ac'$, define
the empirical and population losses as follows:
\[
\hat\Lc(\bal):=\frac{1}{N_u}\sum_{i\in[N_u]}-\log \qb^{(\bal)}_{\text{ID}_i}\quad\text{and}\quad\Lc(\bal):=\E_{v_i\sim\qb^\st}-\log \qb^{(\bal)}_i.
\]
Then we have
\begin{align}
\E[\hat\Lc(\bal)]=\Lc(\bal)\quad\text{and}\quad \Lc(\bal)-\Lc(\bal^\st)=D_{\text{KL}}\left(\qb^\st~||~\qb^{(\bal)}\right).\label{eq one2s KL distance}
\end{align}


We first bound
\begin{align*}
    \Lc(\hat\bal)-\Lc(\bal^\st)&=D_{\text{KL}}(\qb^\st~||~\qb^{(\hat\bal)})
\end{align*}
By the optimality of $\hat\bal$ in \eqref{eq:alpha-erm}, we have that
\begin{align*}
    \hat\Lc(\hat \bal)&\leq \hat \Lc(\bal^\st).
\end{align*}
Hence,
\begin{align}
    \Lc(\hat\bal)-\Lc(\bal^\st)&=\left(\Lc(\hat\bal)-\hat\Lc(\hat\bal)\right)+\left(\hat\Lc(\hat\bal)-\hat\Lc(\bal^\st)\right)+\left(\hat\Lc(\bal^\st)-\Lc(\bal^\st)\right)\nn\\
    &\leq 2 \max_{\bal\in\Ac'}\big|\hat\Lc(\bal)-\Lc(\bal)\big|.\label{eq excess rick}
\end{align}
Therefore it suffices to control the uniform deviation of $\hat\Lc$ from $\Lc$ over $\Ac'$.

By Assumption~\ref{assum pos}, we have $0\leq-\log\qb_{\text{ID}_i}^{(\bal)}\leq |\log c|=:C$. 
%
%
%
%
%
%
%
Then, applying Hoeffding's inequality, for $\bal\in\Ac'$, we have that 
\begin{align}
\P\left(\big|\hat\Lc(\bal)-\Lc(\bal)\big|\geq t\right)\leq 2\exp\left(-\frac{2t^2N_u}{C^2}\right).\label{eq hoeff bound}
\end{align}

Next, since for any $x,y\in[c,1]$
\[
|\log x-\log y|\leq \frac{1}{c}|x-y|.
\]
Then we have that 
\[
\big|\hat\Lc(\bal^\st)-\hat\Lc(\hat\bal)\big|\leq \frac{1}{c} \tn{\qb^\st-\qb^{(\hat\bal)}}_1.
\]
Similarly,
\[
\big|\Lc(\bal^\st)-\Lc(\hat\bal)\big|\leq \frac{1}{c} \tn{\qb^\st-\qb^{(\hat\bal)}}_1.
\]
Next, since $f$ is $L$-Lipschitz from $(\R^d,\|\cdot\|_2)$ to $(\R^T,\|\cdot\|_1)$. Then
\[
\tn{\qb^{\st}-\qb^{(\hat\bal)}}_1=\tn{f(\Eb^\top\bal^\st)-f(\Eb^\top\hat\bal)}_1\leq L\|\Eb^\top\bal^\st-\Eb^\top\hat\bal\|_2\leq L\cdot\sigma_{\max}(\Eb)\tn{\bal^\st-\hat\bal}_2,
\]
where $\sigma_{\max}(\Eb)$ returns the maximal singular value of matrix $\Eb$.
Combining, we obtain for any $\bal,\bal'\in\Ac'$
\[
\big|\hat\Lc(\bal)-\hat\Lc(\bal')\big|~\vee~\big|\Lc(\bal)-\Lc(\bal')\big|\leq \underset{=:\tilde L}{\underbrace{\frac{L\sigma_{\max}(\Eb)}{c}}}\tn{\bal-\bal'}_2.
\]
Let $\{\bal^{(k)}\}_{k=1}^{M}$ be the $\eta$-covering of $\Ac'$ in $\ell_2$ where $M:=\Nc(\Ac',\tn{\cdot}_2,\eta)$. Following \cite{vershynin2018high}, $M$ is bounded by a standard covering bound for $s$-sparse vectors in an $\ell_2$-ball of radius $B$, given by 
\[
M\leq \begin{pmatrix}T\\s\end{pmatrix} \left(\frac{3B}{\eta}\right)^{s}\leq\left(\frac{eT}{s}\right)^s\left(\frac{3B}{\eta}\right)^{s}\leq\left(\frac{9BT}{s\eta}\right)^s. 
\]
Note that for any $\bal\in\Ac'$, we can pick $\bal^{(k)}$ such that $\tn{\bal-\bal^{(k)}}\leq \eta$. Then, 
\begin{align*}
\big|\hat\Lc(\bal)-\Lc(\bal)\big|&\leq \big|\hat\Lc(\bal)-\hat\Lc(\bal^{(k)})\big|+\big|\hat\Lc(\bal^{(k)})-\Lc(\bal^{(k)})\big|+\big|\Lc(\bal^{(k)})-\Lc(\bal)\big|\\
&\leq\big|\hat\Lc(\bal^{(k)})-\Lc(\bal^{(k)})\big|+2\tilde L\eta.
\end{align*}
Recall \eqref{eq hoeff bound}. Taking a union bound gives 
\[
\P\left(\sup_{\bal\in\Ac'}\big|\hat\Lc(\bal)-\Lc(\bal)\big|\geq t+2\tilde L\eta\right)\leq 2M\exp\left(-\frac{2t^2N_u}{C^2}\right)
\]
which returns (applying \eqref{eq excess rick})
\begin{align}
\P\left(\Lc(\hat\bal)-\Lc(\bal^\st)\geq 2t+4\tilde L\eta\right)\leq 2M\exp\left(-\frac{2t^2N_u}{C^2}\right).\label{eq bound one2s single u}
\end{align}

\smallskip
\noindent\textbf{Step 3: Apply a union bound over all $u\in\Uc$.}

So far, we have derived an upper bound for each $u \in \mathcal U$. We now consider the bound holds uniformly over all $u \in \mathcal U$.

Similarly, applying Lemma~\ref{lemma:min count}, 
we have that with probability at least $1-\exp(-N/8m)$ (via choosing $\eps=0.5$),
\[
N_u\geq N/2m.
\]
Combining with \eqref{eq bound one2s single u} returns that 
\begin{align*}
\P\left(\Lc(\hat\bal)-\Lc(\bal^\st)\geq 2t+4\tilde L\eta\right)\leq 2M\exp\left(-\frac{t^2N}{mC^2}\right)+\exp\left(-\frac{N}{8m}\right).
\end{align*}
Recalling the estimation risk from \eqref{def risk} and applying \eqref{eq one2s KL distance}, we get 
\begin{align*}
\Rc(\{\hat\bal^{(u)}\}_{u\in\Uc})&=\max_{u\in\Uc} D_{\text{KL}}(\qb^{(u)}~\|~f(\Eb^\top\hat\bal^{(u)}))\\
&=\max_{u\in\Uc} D_{\text{KL}}(\qb^{(u)}~\|~\qb^{(\hat\bal^{(u)})})\\
&=\max_{u\in\Uc}\Lc(\hat\bal^{(u)})-\Lc(\bal^{(u)}).
\end{align*}
Applying another union bound over all $u\in\Uc$ ($|\Uc|=m$), we have that
%
\begin{align*}
\P\left(\Rc(\{\hat\bal^{(u)}\}_{u\in\Uc})\geq 2t+4\tilde L\eta\right)&\leq 2mM\exp\left(-\frac{t^2N}{mC^2}\right)+m\exp\left(-\frac{N}{8m}\right)\\
&\leq 2m\left(\frac{9BT}{s\eta}\right)^s\exp\left(-\frac{t^2N}{mC^2}\right)+m\exp\left(-\frac{N}{8m}\right).
\end{align*}
Equivalently,
\begin{align*}
    \P\left(\Rc(\{\hat\bal^{(u)}\}_{u\in\Uc})\geq t+\eps\right)
&\leq 2m\left(\frac{36BTL\sigma_{\max}(\Eb)}{cs\eps}\right)^s\exp\left(-\frac{t^2N}{4m\log^2c}\right)+m\exp\left(-\frac{N}{8m}\right).
\end{align*}
\begin{enumerate}
    \item Setting $m\exp\left(-\frac{N}{8m}\right)\leq\frac{\delta}{2}$  returns 
    \[
    N\geq8m\log\frac{2m}{\delta}.
    \]
    \item Setting 
    \[
    2m\left(\frac{36BTL\sigma_{\max}(\Eb)}{cs\eps}\right)^s\exp\left(-\frac{t^2N}{4m\log^2c}\right)=\frac{\delta}{2}
    \]
    returns
    \[
    t=\sqrt{\frac{4m\log^2c}{N}\left(s\log \frac{36BTL\sigma_{\max}(\Eb)}{cs\eps}+\log \frac{4m}{\delta}\right)}.
    \]
\end{enumerate}
Combining gets, when
\[
N\geq O\left(m\log(m/\delta)\right),
\]
we have that with probability at least $1-\delta$,
\[
\Rc(\{\hat\bal^{(u)}\}_{u\in\Uc})\leq \min_{\eps>0}\left\{\eps+O\left(\sqrt{\frac{m\log^2c}{N}\left(\log\frac{m}{\delta}+s\log\frac{T}{cs\eps}\right)}\right)\right\}.
\]
It completes the proof.

\end{proof}
\section{Experimental Setup and Implementation Details}

\subsection{Synthetic Sentence Generation}\label{app fake data generate}

We construct a synthetic corpus using the \textbf{Phi-3.5 Mini Instruct} language model to study embedding-based adaptation under controlled vocabulary shifts. The model is used both to generate coherent natural-language sentences and to introduce controlled lexical perturbations through synthetic, non-sense word usage.

We define several semantic domains, including \emph{food}, \emph{nature}, \emph{office}, \emph{household}, and \emph{transportation}. This setup is closely related to prior work on controlled distribution shifts and lexical perturbations in language models
\citep{BenDavid2010,ribeiro2020checklist}, and follows recent uses of synthetic data for analyzing robustness and adaptation in large language models
\citep{Gururangan2020,Muennighoff2023}.
For each domain, we curate a list of real-world vocabulary items. In this appendix, we describe the \emph{food} domain in detail; all other domains follow the same procedure.

From the food-domain vocabulary, we select a subset of 100 real words (e.g., \emph{broccoli, cumin, lentils, apple, basil}). Using \textbf{Phi-3.5 Mini Instruct}, we prompt the model to generate natural, declarative sentences such that each sentence contains at least one word from the selected vocabulary list. The model is instructed to output only complete sentences, without explanations, lists, or formatting artifacts.

Examples of generated sentences from the food domain are shown below.
\begin{figure}[t]  
    \centering
    \includegraphics[width=0.7\columnwidth, trim={0 3.5cm 4cm 2cm}, clip]{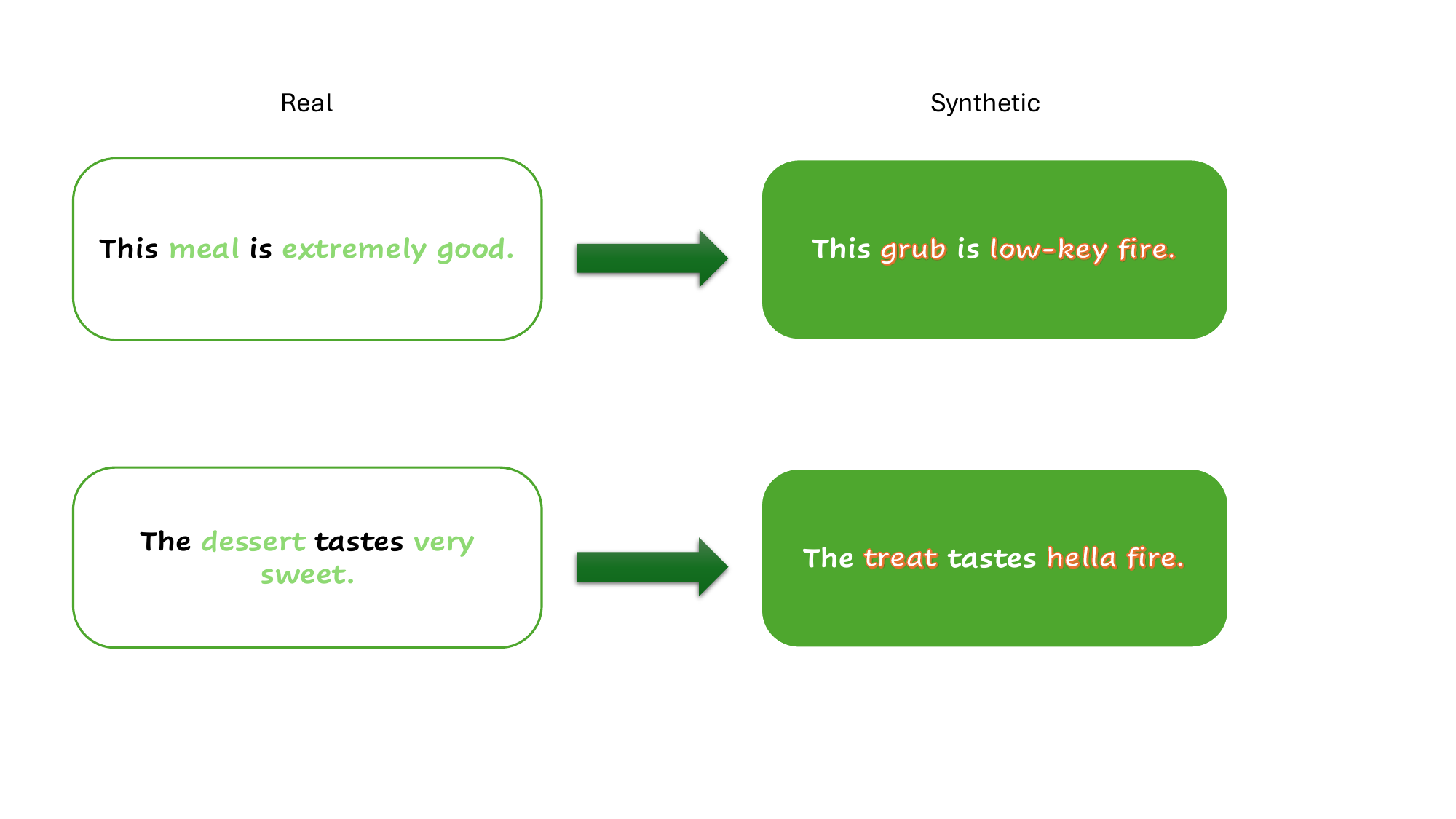}
    \caption{Illustration of synthetic sentence creation from real sentences.}
    \label{fig:synthetic_sentences}
\end{figure}

Figure~\ref{fig:synthetic_sentences} illustrates the process of creating synthetic sentences from real sentences. 
The left column shows original sentences sampled from a real domain (e.g., food), and the right column shows the corresponding synthetic sentences.

In this diagram, we highlight the key steps:

\begin{enumerate}
    \item \textbf{Selection of Real Tokens:} Since the corpus is generated using a fixed set of 100 real food-domain words, we treat these 100 words as the \emph{target vocabulary} for synthetic replacement. 
Thus, for each real sentence, we identify occurrences of any target word from this set (e.g., \textcolor{blue}{meal}, \textcolor{blue}{dessert}, \textbf{\textcolor{blue}{extremely good}}, \textbf{\textcolor{blue}{very sweet}}) that will be transformed in the synthetic version.

    \item \textbf{Replacement with Synthetic Tokens:} Each selected real token is replaced with a semantically meaningless synthetic token, while the corresponding descriptive phrase is also highlighted.
    For instance, ``This \textcolor{blue}{meal} is \textbf{\textcolor{blue}{extremely good}}'' becomes ``This \textcolor{red}{grub} is \textbf{\textcolor{red}{low-key fire}},'' and 
    ``The \textcolor{blue}{dessert} tastes \textbf{\textcolor{blue}{very sweet}}'' becomes ``The \textcolor{red}{treat} tastes \textbf{\textcolor{red}{hella fire}}.''
    
    \item \textbf{Context Preservation:} The replacement ensures that only the chosen lexical items and descriptive phrases change, while sentence syntax and surrounding context remain intact, maintaining comparability between real and synthetic sentences.
    
    \item \textbf{Synthetic Dataset Construction:} Repeating this process over multiple sentences generates a large set of real-synthetic sentence pairs. 
    These pairs are used for controlled experiments to evaluate embedding adaptation, vocabulary learning, and catastrophic forgetting.
\end{enumerate}

To construct synthetic sentence pairs, we define a separate list of semantically meaningless tokens (e.g., \emph{Blorpt, Snazzit, Quorfle, Trimplix, Whimzor}). These fake words do not appear in the pretrained vocabulary and are introduced solely to induce controlled lexical shifts.

\begin{table}[t]
\caption{\small Trainable parameter counts for LoRA (rank \(r=8\), \(\alpha=32\)), soft prompt tuning (prompt length \(=20\)), and embedding tuning when adding 100 new tokens (reporting only the effective trainable embedding rows).}
\label{tab:peft-params}
\begin{center}
\small 
\begin{tabular}{lcccc}
\toprule
Model & Total Params & LoRA Params & Soft Prompt Params & Embed Params (+100 toks) \\
\midrule
Phi-3.5-mini-Instruct & 3.80B & 2,883,584 & 61,440 & 614,400 \\
Llama-3.2-1B          & 1.00B & 1,703,936 & 40,960 & 204,800 \\
Llama-3.1-3B          & 3.21B & 4,587,520 & 61,440 & 307,200 \\
Llama-3.1-8B          & 8.03B & 6,815,744 & 81,920 & 819,200 \\
Qwen2.5-1.5B          & 1.54B & 2,179,072 & 30,720 & 153,600 \\
Qwen2.5-3B            & 3.09B & 3,686,400 & 40,960 & 204,800 \\
Qwen2.5-7B            & 7.61B & 5,046,272 & 71,680 & 358,400 \\
\bottomrule
\end{tabular}
\end{center}

\end{table}

\subsubsection{LoRA Implementation}

We include LoRA as a standard parameter-efficient fine-tuning (PEFT) baseline for learning the synthetic vocabulary task.
LoRA freezes the pretrained backbone and injects trainable low-rank matrices into selected transformer modules, enabling adaptation with a small number of additional parameters.
In our implementation, LoRA adapters are inserted into the attention projection layers, and only the LoRA parameters are optimized on the synthetic-token training set.
We use rank \(r=8\), scaling \(\alpha=32\), dropout \(0.05\), and no bias parameters.

\paragraph*{Results and Forgetting.}
Tables~\ref{tab:model_comparison_k1000} summarize LoRA's performance.
LoRA often achieves competitive convergence on the synthetic vocabulary task, e.g., on LLaMA-3.2-1B its test loss is close to embedding tuning (ET) and substantially better than full fine-tuning (FFT).
However, LoRA still induces non-negligible forgetting on WikiText, with loss increases that can remain substantial in some backbones (e.g., Phi-3.5 Mini).
Overall, LoRA reduces forgetting compared to FFT but is consistently worse than ET, which achieves zero forgetting across all models.

\subsubsection{Embedding Tuning Implementation}
To study whether pretrained LMs can acquire novel synthetic vocabulary without disrupting existing linguistic knowledge, we perform \emph{embedding tuning} (ET), where only the parameters corresponding to newly introduced tokens are updated. For each backbone model (LLaMA-3.2-1B, LLaMA-3.1-8B, Phi-3.5-mini-Instruct, and Qwen2.5-3B-Instruct), we extend the tokenizer with 100 semantically meaningless synthetic tokens (e.g., \texttt{Quib}, \texttt{Flem}, \texttt{Snix}, \dots) using \texttt{additional\_special\_tokens}, and resize the embedding matrix accordingly. 

\paragraph*{Results and Forgetting.} 
ET yields the most parameter-efficient adaptation: updating only the synthetic token rows corresponds to $2 \times 100 \times d$ trainable parameters (embedding + LM head), which is substantially smaller than LoRA and full fine-tuning (see Table~\ref{tab:peft-params}). Empirically, ET achieves consistently low synthetic test loss while also preserving pretrained capabilities, exhibiting near-zero (often zero) forgetting on WikiText across all model scales (Table~\ref{tab:model_comparison_k1000})

Table~\ref{tab:peft-params} summarizes the trainable parameter budgets of different PEFT methods across a range of model families and scales. 
LoRA introduces approximately \(1.7\)M--\(6.8\)M trainable parameters, which is substantially larger than soft prompt tuning (\(\sim\)31K--82K) but still far smaller than full fine-tuning. 
In contrast, embedding tuning scales linearly with the number of added tokens and the model hidden dimension, requiring 153K--819K effective trainable parameters when adding \(+100\) synthetic tokens. 
Overall, soft prompt tuning has the smallest trainable footprint, LoRA provides a moderate parameter budget, and embedding tuning lies in between while remaining highly parameter-efficient relative to full fine-tuning.

\subsection{Cross-lingual Vocabulary Expansion}
\label{app:crosslingual_results}

Table~\ref{tab:es_de_results} reports cross-lingual vocabulary expansion results on Qwen2.5-3B for Spanish, German, and Arabic. Table~\ref{tab:arabic_forgetting} further reports Arabic results across multiple backbone models. In all cases, lower target-language loss indicates better adaptation, while forgetting is measured as the increase in English loss after adaptation. Across all the settings, the results suggest that embedding tuning remains effective even under realistic experimental scenarios.


\begin{table}[t]
\centering
\caption{\small Arabic performance and English forgetting across methods for different models. Lower Arabic loss is better. Forgetting is measured as the increase in English loss after adaptation; lower is better.}
\label{tab:arabic_forgetting}
\small
\begin{tabular}{l|cccc|c|cccc}
\toprule
\multirow{2}{*}{\textbf{Model}}
& \multicolumn{4}{c|}{\textbf{Performance (Arabic loss)}}
& \multirow{2}{*}{\textbf{Base English loss}}
& \multicolumn{4}{c}{\textbf{Forgetting (English)}} \\
& {FFT} & {LoRA} & {PT} & {ET}
& 
& {FFT} & {LoRA} & {PT} & {ET} \\
\midrule
Llama3.2-1B  & 5.27 & 3.65 & 6.05 & \textbf{3.47} & 2.18 & +13.59 & +0.35 & +0.42 & \textbf{0.0001} \\
Llama3.2-3B  & 7.43 & 3.57 & 5.62 & \textbf{3.15} & 1.91 & +14.22 & +0.39 & +0.37 & \textbf{0.0003} \\
Qwen2.5-3B   & 7.95 & 3.95 & 4.34 & \textbf{2.82} & 2.01 & +11.95 & +0.46 & +0.06 & \textbf{0.00} \\
Qwen2.5-7B   & 6.18 & 3.90 & 4.59 & \textbf{2.76} & 1.90 & +19.19 & +0.18 & +0.09 & \textbf{-0.15} \\
\bottomrule
\end{tabular}
\end{table}

  

\section{Extended Related Work}

\subsection{Continual Learning and Forgetting in LLMs}
Catastrophic forgetting remains a fundamental challenge when neural networks are updated sequentially. Classical approaches such as Elastic Weight Consolidation \citep{Kirkpatrick2017} and memory-based methods like GEM \citep{LopezPaz2017} and A-GEM \citep{Chaudhry2019} mitigate forgetting in small-scale networks, but often fail on large language models (LLMs) due to scale and complexity \citep{wu2024continuallearninglargelanguage}. Comprehensive surveys by \citep{delange2021continual} and more recently \citep{shi2025continual} categorize these strategies, yet scaling them effectively remains an open problem.

Parameter-efficient fine-tuning (PEFT) methods, including adapters and LoRA, were designed to reduce forgetting by isolating task-specific updates \citep{hu2021loralowrankadaptationlarge, Pfeiffer2021}. However, recent work indicates that even PEFT methods can suffer from "gradual forgetting" before reaching a catastrophic tipping point when updates accumulate \citep{gupta-etal-2024-model, bafghi2024parameter}. Retrieval-augmented methods leverage external memory to partially preserve knowledge \citep{Lewis2020, Muennighoff2023}, but fine-tuning for RAG capabilities can paradoxically induce distribution shifts that degrade general instruction-following abilities \citep{huang2025selfaug}.

Furthermore, targeted model editing approaches, often viewed as precise surgical updates, have been shown to cause abrupt "disabling edits" where a single update cripples general model function \citep{Meng2022, gupta-etal-2024-model}. These findings highlight the critical need for principled continual update strategies that go beyond simple parameter isolation or external memory.

\subsection{Embedding and Parameter-Efficient Tuning}
Parameter-efficient fine-tuning (PEFT) techniques update only a small subset of model parameters, enabling the integration of new knowledge while preserving existing model capabilities. These approaches include embedding updates, prefix-tuning \citep{li-liang-2021-prefix} and P-tuning v2 \citep{liu-etal-2022-p}, LoRA \citep{hu2021loralowrankadaptationlarge}, adapters, including multi-task-aware AdapterFusion \citep{Pfeiffer2021}, and BitFit \citep{ben2022bitfit}. 
Empirical studies have demonstrated that such methods effectively incorporate new knowledge with reduced risk of catastrophic forgetting compared to full fine-tuning \citep{Meng2022, Muennighoff2023, lester-etal-2021-power, he2020deberta}. Recent advances have further refined these techniques: \citet{xiong2025oplora} introduced Orthogonal Projection LoRA (OPLoRA) to mathematically guarantee non-interference with pretrained weights, while \citet{chen2024mofo} proposed the Momentum-Filtered Optimizer (MoFO) to mitigate forgetting via sparse, momentum-guided updates without requiring pretraining data. Despite these practical successes, a formal theoretical understanding of exactly why specific sparse update patterns, like embedding tuning, prevent forgetting remains limited \citep{Meng2022, DBLP:journals/corr/abs-2406-12227}.

\subsection{Connection to Sparse Representation Learning}
Our token-to-dictionary mapping framework connects naturally to sparse representation learning and dictionary learning theory \citep{olshausen1997sparse, pub.1026876023}. In our formulation, learning a new token $u \in U$ corresponds to finding a sparse representation $\alpha(u) \in \mathbb{R}^T$ such that the new token's embedding can be expressed as a sparse linear combination of existing token embeddings. This perspective enables us to apply classical sparse coding theory: the sample complexity bounds in Theorem~\ref{thm:one 2 s} directly reflect the cost of learning sparse combinations, with complexity scaling in the sparsity parameter $s$ rather than the vocabulary size $T$. This connection also suggests that our framework could benefit from advances in structured sparsity and learned dictionary methods.

\subsection{Domain Adaptation and Vocabulary Expansion}
Domain adaptation enables models to specialize in new domains, but can lead to degradation when internal representations drift \citep{BenDavid2010, Gururangan2020, Muennighoff2023}. Full-parameter fine-tuning amplifies this risk \citep{Muennighoff2023}. Frameworks like AdapterHub facilitate the modular reuse of adapters across tasks, empirically improving adaptability while reducing interference \citep{Pfeiffer2021}. 
Recent approaches have focused on explicitly expanding the model's vocabulary to better represent domain-specific concepts: \citet{liu-etal-2024-gold} demonstrated that adaptively selecting a subset of high-value domain tokens (VEGAD) outperforms full vocabulary expansion, while \citet{wei2024vary} successfully scaled vision vocabularies in multimodal models without overwriting original knowledge. Similarly, \citet{yamaguchi2025can} showed that strategic vocabulary expansion can be achieved with minimal data ($\sim$0.01GB), enabling low-resource adaptation. Our work complements these methods by introducing a principled framework for vocabulary expansion via embedding updates, which update only new token embeddings, thereby minimizing interference with existing token representations and supporting stable continual updates.

\subsection{Knowledge Retention in LLMs}
Recent studies emphasize maintaining prior knowledge while adapting LLMs. Model editing approaches \citep{Meng2022} and modular fine-tuning strategies \citep{hu2021loralowrankadaptationlarge, he2020deberta} demonstrate empirical retention of prior capabilities. Sparse or selective update strategies have further been shown to mitigate forgetting in LLMs by isolating critical parameters \citep{chen2025mofomomentumfilteredoptimizermitigating}. For instance, Source-Shielded Updates (SSU) explicitly protect pre-trained weights during adaptation, reducing forgetting on source tasks to under 4\% \citep{yamaguchi2025mitigatingcatastrophicforgettingtarget}. However, few existing methods provide formal guarantees of knowledge retention, particularly when introducing new tokens or domain-specific vocabulary. Our approach differs fundamentally: rather than relying on regularization or external memory mechanisms, we guarantee preservation \textit{by construction}. The Markovian formulation ensures that embedding updates leave token transition probabilities invariant, enabling formal proofs of zero forgetting on the original vocabulary.
While recent work such as Orthogonal Projection LoRA (OPLoRA) offers mathematical guarantees for preserving top-$k$ singular values during fine-tuning \citep{xiong2025oplora}, theoretical frameworks specifically addressing vocabulary expansion remain scarce. Our framework addresses this gap by formally modeling language generation as a Markov process over tokens, under which updating embeddings alone preserves the original token-level transition structure by construction.

Recent work on model editing \citep{Meng2022, gupta-etal-2024-model} targets fact-level modifications, whereas our framework provides principled vocabulary expansion with formal sample complexity bounds. The token-to-dictionary mapping could extend model editing to handle concurrent updates more robustly.
%
Nested Learning and our work share the goal of continual adaptation with minimal forgetting, but they operate at different levels. Nested Learning is a general optimization and architectural framework for continual learning, whereas our paper studies a specific post-training setting for autoregressive language models: adding new vocabulary as state-space expansion in a Markov process. Our contribution is a token-level formulation with sample-complexity guarantees for token-to-dictionary mappings realized through embedding tuning. We therefore view the two directions as complementary rather than competing.

\end{document}